\definecolor{darkblue}{rgb}{0,0,1}
\def\Real{\mathop{\mathbb{R}}\nolimits}
\def\argmin{\mathop{\rm argmin}\nolimits}
\newcommand{\bd}{\boldsymbol{d}}
\newcommand{\bg}{\boldsymbol{g}}
\newcommand{\bu}{\boldsymbol{u}}
\newcommand{\bx}{\boldsymbol{x}}
\newcommand{\by}{\boldsymbol{y}}
\newcommand{\bX}{\boldsymbol{X}}
\newcommand{\balpha}{\boldsymbol{\alpha}}
\newcommand{\bbeta}{\boldsymbol{\beta}}
\newcommand{\bepsilon}{\boldsymbol{\epsilon}}
\newcommand{\btheta}{\boldsymbol{\theta}}
\newcommand{\bxi}{\boldsymbol{\xi}}
\newcommand{\bTheta}{\boldsymbol{\Theta}}
\newcommand{\I}{\mathcal{I}}
\newcommand{\E}{\mathbb{E}}
\newcommand{\F}{\mathcal{F}}
\newcommand{\J}{\mathcal{J}}
\newcommand{\cL}{\mathcal{L}}
\newcommand{\cO}{\mathcal{O}}
\newcommand{\lc}{\tau_{\balpha,k}}
\newcommand{\hth}{\widehat{\bTheta}_n}
\newcommand{\tm}{\widehat{\bTheta}_n^{(\text{MoM})}}
\newcommand{\C}{\mathcal{C}}
 \newcommand{\sP}{\mathbbm{P}}
\newcommand{\one}{\mathbbm{1}}
\newtheorem{thm}{Theorem}[section]
\newtheorem{lem}{Lemma}[section]
\newtheorem{cor}{Corollary}[section]
\newtheorem{ass}{A \hspace{-5pt}}
\newtheorem{defn}{Definition}
\title{Uniform Concentration Bounds toward a Unified  Framework for Robust Clustering}
\author[1]{Debolina Paul\thanks{Joint first authors contributed equally to this work.}}
\author[2]{Saptarshi Chakraborty$^\dagger$}
 \author[3]{Swagatam Das}
 \author[4]{Jason Xu\thanks{Correspondence to: \href{mailto:jason.q.xu@duke.edu}{jason.q.xu@duke.edu}. \\ To appear in the Thirty-fifth Conference on Neural Information Processing Systems (NeurIPS), 2021.}}
 \affil[1]{Department of Statistics, Stanford University}
 \affil[2]{Department of Statistics, University of California, Berkeley}
 \affil[3]{Electronics and Communication Sciences Unit, Indian Statistical Institute, Kolkata, India}
 \affil[4]{Department of Statistical Science, Duke University}
\date{\vspace{-5ex}}
\begin{document}

\maketitle

\begin{abstract}
Recent advances in center-based clustering continue to improve upon the drawbacks of Lloyd's celebrated $k$-means algorithm over $60$ years after its introduction. Various methods seek to address poor local minima, sensitivity to outliers, and data that are not well-suited to Euclidean measures of fit, but many are supported largely empirically. Moreover, combining such approaches in a piecemeal manner can result in ad hoc methods, and the limited theoretical results supporting each individual contribution may no longer hold. Toward addressing these issues in a principled way, this paper proposes a cohesive robust framework for center-based clustering under a general class of dissimilarity measures. In particular, we present a rigorous theoretical treatment within a Median-of-Means (MoM) estimation framework, showing that it subsumes several popular $k$-means variants. In addition to unifying existing methods, we derive uniform concentration bounds that complete their analyses, and bridge these results to the MoM framework via Dudley's chaining arguments. Importantly, we neither require any assumptions on the distribution of the outlying observations nor on the relative number of observations $n$ to features $p$. We establish strong consistency and an error rate of $O(n^{-1/2})$ under mild conditions, surpassing the best-known results in the literature. The methods are empirically validated thoroughly on real and synthetic datasets. 
\end{abstract}

\section{Introduction}
Clustering is a fundamental task in unsupervised learning, which seeks to discover naturally occurring groups within a dataset. Among a plethora of algorithms in the literature, center-based methods remain widely popular, with $k$-means \citep{macqueen1967some,lloyd1982least} as the most prominent example. Given $n$ data points $\mathcal{X}=\{\bX_i:i =1,\dots,n\}\subset \Real^p$,  $k$-means seeks to partition the data into $k$ mutually exclusive and exhaustive groups by minimizing the within cluster variance. Representing the cluster centroids $\bTheta = \{\btheta_1,\dots,\btheta_k\} \subset \Real^p$, the $k$-means problem is formulated as the minimization of the objective function
$$  f_{k\text{-means}}(\bTheta) = \sum_{i=1}^n \min_{1 \le j \le k} d(\bX_i , \btheta_j), $$ where $d(\cdot,\cdot)$ is a dissimilarity measure. Taking the squared Euclidean distance $d(\bx,\by) = \|\bx-\by\|_2^2$ yields the classical $k$-means formulation.

Clustering via $k$-means is well-documented to be sensitive to initialization \citep{vassilvitskii2006k}, prone to getting stuck in poor local optima \citep{zhang1999k,xu2019power,chakraborty2020entropy}, and fragile against linearly non-separable clusters \citep{ng2001spectral} or in the presence of even a single outlying observation \citep{klochkov2020robust}. Researchers continue to tackle these drawbacks of $k$-means while preserving its simplicity and interpretability \citep{banerjee2005clustering,aloise2009np,teboulle2007unified,ostrovsky2013effectiveness,zhang2020simple,telgarsky2012agglomerative}. Although often successful in practice, few of these approaches are grounded in rigorous theory or provide finite-sample statistical guarantees. The available consistency results are mostly asymptotic in nature \citep{chakraborty2020detecting,chakraborty2020entropy,terada2014strong,terada2015strong}, lacking convergence rates or operating under restrictive assumptions on the relation between $p$ and $n$ \citep{paul2021uniform}. For example, the recent large sample analysis \citep{paul2021uniform} of the hard Bregman $k$-means algorithm \citep{banerjee2005clustering} obtains an asymptotic error rate of $O(\sqrt{\log n /n})$ under restrictive assumptions on the relation between $n$ and $p$.
The approach by \cite{teboulle2007unified} focuses on a unified framework from an optimization perspective, while \cite{balcan2008discriminative,telgarsky2012agglomerative} focus on different divergence-based methods to better understand the underlying feature-space. 

The presence of outliers only further complicates matters. Outlying data are common in real applications, but many of the aforementioned approaches are fragile to deviations from the assumed data generating mechanism. On the other hand, recent work on robust clustering methods \citep{deshpande2020robust,fischer2020robust,klochkov2020robust} do not integrate the practical advances surveyed above, and similarly lack finite-sample guarantees. To bridge this gap, the Median of Means (MoM) literature provides a promising and attractive framework to robustify center-based clustering methods against outliers. MoM estimators are not only insensitive to outliers, but are also equipped with exponential concentration results under the mild condition of finite variance \citep{lugosi2019regularization, lecue2020robust, bartlett2002model,lerasle2019lecture,laforgue2019medians}. Recently, near-optimal results for mean estimation  \citep{minsker2018uniform}, classification \citep{lecue2020robust1}, regression \citep{mathieu2021excess,lugosi2019regularization}, clustering \citep{klochkov2020robust,BRUNETSAUMARD2022107370}, bandits \citep{bubeck2013bandits} and optimal transport \citep{pmlr-v130-staerman21a} have been established from this perspective. 

Under the MoM lens, we propose a unified framework for robust center-based clustering. Our treatment considers a family of Bregman loss functions not restricted to the classical squared Euclidean loss. We explore the exact sample error bounds for a general class of algorithms under mild regularity assumptions that apply to popular existing approaches, which we show to be special cases. The proposed framework allows for the data to be divided into two categories: the set of inliers ($\mathcal{I}$) and the set of outliers ($\mathcal{O}$). The inliers are assumed to be independent and identically distributed (i.i.d.), while absolutely no assumption is required on $\mathcal{O}$, allowing outliers to be unboundedly large, dependent on each other, sampled from a heavy-tailed distribution and so on. Our contributions within the MoM framework make use of Rademacher complexities \citep{bartlett2002rademacher,bartlett2005local} and symmetrization arguments \citep{vapnik2013nature}, powerful tools that often find use in the empirical process literature but, in our view, are underexplored in the context of robust clustering. 

The paper is organized as follows. 
In Section \ref{setup}, we identify a general centroid-based clustering framework which encompasses $k$-means \citep{macqueen1967some}, $k$-harmonic means \citep{zhang1999k}, and power $k$-means \citep{xu2019power} as special cases, to name a few. We show how this framework is made robust via Median of Means estimation, yielding  an array of center-based robust clustering methods. Within this framework, we derive uniform deviation bounds and concentration inequalities under standard regularity conditions through bounding Rademacher complexity by metric entropy via Dudley's chaining argument in Section \ref{theory 1}. The analysis newly reveals the convergence rate for popularly used clustering methods such as $k$-harmonic means and power $k$-means, matching the known rate results for $k$-means, and elegantly carries over to their MoM counterparts. We then implement and empirically assess the resulting algorithms through simulated and real data experiments. In particular, we find that power $k$-means \citep{xu2019power} under the MoM paradigm outperforms the state-of-the-art in the presence of outliers.

\subsection{Related theoretical analyses of clustering} 
In seminal work, \cite{pollard1981strong}, proved the strong consistency of $k$-means under a finite second moment assumption, spurring the large sample analysis of unsupervised clustering methods. This result has been extended to separable Hilbert spaces  \citep{biau2008performance,levrard2015nonasymptotic} and for related algorithms \citep{chakraborty2020detecting,terada2014strong,terada2015strong}, but these do not provide guarantees on the number of samples required so that the excess risk falls below a given threshold. Towards finding probabilistic error bounds, following research derived uniform concentration results for $k$-means and its variants \citep{telgarsky2012agglomerative},  sub-Gaussian distortion bounds for the $k$-medians problem \citep{brownlees2015empirical},  and a $O(\sqrt{\log n /n})$ bound on  $k$-means with Bregman divergences \citep{paul2021robust}. More recently, concentration inequalities for $k$-means under the MoM paradigm have been established  \citep{klochkov2020robust,BRUNETSAUMARD2022107370} under the restriction that sample cluster centroids  ($\hth$ in Section \ref{theory 1}) are assumed to be bounded. 
This paper shows how a number of center-based clustering methods can be brought under the same umbrella and can be robustified using a general-purpose scheme. The theoretical analyses of this broad spectrum of methods is conducted via Dudley's chaining arguments and through the aid of Rademacher complexity-based uniform concentration bounds. This approach enables us to replace assumptions on the sample cluster centroids by a bounded support assumption of the (inlying) data points, which yields an intuitive way of ensuring the boundedness of the cluster centroids through the obtuse angle property of Bregman divergences (Lemma \ref{lemma1} below). In contrast to the prior results, our analyses are \textit{not} asymptotic in nature, so that the derived bounds hold for all values of the model parameters.   

\section{Problem Setting and Proposed Method}\label{setup}
We consider the problem of partitioning a set of $n$ data points $\mathcal{X}=\{\bX_i:i =1,\dots,n\}\subset \Real^p$ into $k$ mutually exclusive  clusters. In a center-based clustering framework, we represent the $j^{\text{th}}$ cluster by its centroid $\btheta_j \in \Real^p$ for each $j\in \{1,\dots,k\}$. To quantify the notion of ``closeness", we allow the dissimilarity measure to be any Bregman divergence. Recall any differentiable, convex function $\phi: \Real^p \to \Real$ generates the Bregman divergence $d_{\phi}: \Real^p \times \Real^p \to \Real_{\ge 0}$ ($\Real_{\ge 0}$ denoting the set of non-negative reals) defined as  $$d_\phi(\bx,\by) = \phi(\bx) - \phi(\by) - \langle \nabla \phi(\by) , \bx - \by \rangle . $$ For instance, $\phi(\bu)=\|\bu\|_2^2$ generates the Euclidean distance. Without loss of generality, one may assume $\phi(\mathbf{0}) = \nabla \phi(\mathbf{0})=0.$  In this paradigm, clustering is achieved by minimizing the  objective
\begin{equation}\label{ob}
    \frac{1}{n}\sum_{i=1}^n \Psi_{\balpha}\left(d_\phi(\bX_i,\btheta_1),\dots,d_\phi(\bX_i,\btheta_k)\right) := f_{\bTheta} (\bX).
\end{equation}
Here $\Psi_{\balpha}: \Real^k_{\ge 0} \to \Real_{\ge 0}$ is a component-wise non-decreasing function (such as a generalized mean) of the dissimilarities $\{d_{\phi}(\bX,\bTheta_j)\}_{j=1}^k$ which satisfies $\Psi(\mathbf{0})=0$. The hyperparameter $\balpha \in \mathcal{A} \subseteq \Real^q$  is specified by the user, and we will additionally assume that $\Psi$ is Lipschitz.  For intuition, we begin by showing how this setup includes several popular clustering methods.  

\paragraph{Examples:} 
Suppose $\phi(\bu)=\|\bu\|_2^2$ and $\Psi(\bx) =  (\sum_{j=1}^k x_j^{-1})^{-1}$. Then the objective \eqref{ob} becomes $ \frac{1}{n}\sum_{i=1}^n (\sum_{j=1}^k \|\bX_i-\btheta_j\|_2^{-2})^{-1}$, which is the objective function of  $k$-harmonic means clustering \citep{zhang1999k}.
Now consider other generalized means: take 
$\Psi_s(\bx) =  M_s(\bx)$ where we denote the \textit{power mean} $M_s(\bx) = (k^{-1}\sum_{i=1}^k x_i^s )^{1/s}$ for  $s\in (-\infty,-1]$. Then objective \eqref{ob} coincides with the recent power $k$-means method \citep{xu2019power}, $\frac{1}{n} \sum_{i=1}^n M_s\left(\|\bX_i-\btheta_1\|_2^2,\dots,\|\bX_i-\btheta_k\|_2^2\right)$.
When $\Psi(\bx) = \min_{1 \le j \le k} x_j$, \eqref{ob} recovers Bregman hard clustering $\frac{1}{n}\sum_{i=1}^n \min_{1 \le j \le k} d_\phi(\bX_i,\btheta_j)$ proposed in \citep{banerjee2005clustering} for any valid $\phi$, while the special case of $\phi(\bu)=\|\bu\|_2^2$ yields the familiar Euclidean $k$-means problem  \citep{macqueen1967some}.

In what follows, we derive concentration bounds that  establish new theoretical guarantees such as consistency and convergence rates for clustering algorithms in this framework. These analyses lead us to a unified, robust framework by embedding this class of  methods within the Median of Means (MoM) estimation paradigm.  
Via elegant connections between the properties of MoM estimators and  Vapnik-Chervonenkis (VC) theory, our MoM estimators too will inherit uniform concentration inequalities from the preceding analysis, extending convergence guarantees to the robust setting. 

\paragraph{Median of Means} Instead of directly minimizing the empirical risk \eqref{ob}, MoM begins by partitioning the data into $L$ sets $B_1,\dots,B_L$ 
 which each contain exactly $b$ many elements (discarding a few observations when $n$ is not divisible by $L$). The partitions can be assigned uniformly at random, or can be shuffled throughout the algorithm \citep{lecue2020robust1}. 
MoM then entails a robust version of the estimator defined under \eqref{ob}  by instead minimizing the following objective with respect to $\bTheta$:
\begin{equation}
    \label{e2}
    \text{MoM}_L^n(\bTheta) =  \text{Median}\left( \frac{1}{b} \sum_{i \in B_1} f_{\bTheta}(\bX_i), \dots, \frac{1}{b} \sum_{i \in B_L} f_{\bTheta}(\bX_i)\right).
\end{equation} 
 
Intuitively, MoM estimators are more robust than their ERM counterparts since under mild conditions, only a subset of the partitions is contaminated by outliers while others are outlier-free. Taking the median over partitions negates the influence of these spurious partitions, thus reducing the effect of outliers. Formal analysis of robustness via breakdown points is also available for MoM estimators \citep{lecue2019learning,rodriguez2019breakdown}; we will make use of the nice concentration properties of MoM estimators in Section \ref{mom}.
      \begin{algorithm}
        \caption{MoM Clustering via Adagrad}
        \label{algo}
        \begin{algorithmic}
        \State {\bfseries Input:}  $\bX \in \Real^{n \times p}$, $k$, $L$, $f_{\bTheta}(\cdot)$, 
        $\eta$, $\epsilon$.
        \State {\bfseries Output:} The cluster centroids $\bTheta$.
        \State Initialization: Randomly partition $\{1,\dots,n\}$ into $L$ many partitions of equal length. Randomly choose $k$ points without replacement from $\mathcal{X}$ to initialize $\bTheta_0$.
          \Repeat
\State {\bfseries Step 1:} Find $\ell_t\in \{1,\dots,L\}$, such that $\text{MoM}_L^n(\bTheta_t) = \frac{1}{b}\sum_{i \in B_{\ell_t}} f_{\bTheta_t}(\bX_i)$.
\State {\bfseries Step 2:} $\bg^{(t)}_j \leftarrow \frac{1}{b} \sum_{i \in B_{\ell_t}} \nabla_{\btheta_j} f_{\bTheta}(\bX_i) $
\State {\bfseries Step 3:} Update $\bTheta$ by $\btheta_j^{(t+1)} \leftarrow \btheta_j^{(t)} - \frac{\eta}{\sqrt{\epsilon + \sum_{t^\prime=1}^{(t)}\|\bg_j^{(t^\prime)}\|_2^2}}\bg_j^{(t)}$.
\Until{objective \eqref{e2} converges}
        \end{algorithmic}
      \end{algorithm}

Furthermore, optimizing \eqref{e2} is made tractable via gradient-based methods. We advocate the Adagrad algorithm \citep{JMLR:v12:duchi11a,goodfellow2016deep}, whose updates are given by 
$$\btheta_j^{(t+1)} \leftarrow \btheta_j^{(t)} - \frac{\eta}{\sqrt{\epsilon + \sum_{t^\prime=1}^{(t)}\|\bg_j^{(t^\prime)}\|_2^2}}\bg_j^{(t)}$$  for hyperparameter $\epsilon>0$, learning rate $\eta>0$, and $\bg_j^{(t)}$ denoting a subgradient of $\text{MoM}_L^n(\bTheta)$ at $\bTheta_t$. That is, if $\ell_t$ denotes the median partition at step $t$, then $$\nabla_{\bTheta_j} \text{MoM}_L^n(\bTheta_t) = \frac{1}{b} \sum_{i \in B_{\ell_t}} \nabla_{\btheta_j} f_{\bTheta}(\bX_i)|_{\bTheta=\bTheta_t} .$$ For any $\Psi_{\balpha}$  differentiable, \,  $$ \nabla_{\btheta_j} f_{\bTheta}(\bx) = \partial_j \Psi_{\balpha} \left(d_\phi(\bx,\btheta_1^{(t)}),\dots,d_\phi(\bx,\btheta_k^{(t)})\right) \nabla_{\btheta} d_{\phi}(\bx,\btheta)$$  by  the chain rule. As a concrete illustration of the general formula, consider the power $k$-means objective $f_{\bTheta}(\bx) = M_s(\|\bx-\btheta_1\|_2^2,\dots,\|\bx-\btheta_k\|_2^2)$: upon partial differentiation, we obtain $$ \nabla_{\btheta_j} f_{\bTheta}(\bx) = \frac{2}{k}\left(\frac{1}{k}\sum_{j^\prime = 1}^k\|\bx-\btheta_{j^\prime}\|_2^{2s}\right)^{1/s-1}\|\bx-\btheta_{j}\|_2^{2(s-1)} (\btheta_j -\bx).$$ As another example, the classic $k$-means objective $f_{\bTheta}(\bx) = \min_{\btheta \in \bTheta}\|\bx-\btheta\|^2_2$ requires the sub-gradient $$ \nabla_{\btheta_j} f_{\bTheta}(\bx) = 2  (\btheta_j -\bx) \one\{j \in \mathcal{J}(\bTheta,\bx)\},$$ where $\mathcal{J}(\bTheta,\bx) = \argmin_{1 \le j \le k} \|\bx-\btheta_j\|^2_2$ and $\one\{\cdot\}$ denotes the indicator function. 
  
    A pseudocode summary appears in Algorithm \ref{algo}. This method tends to find clusters efficiently: for instance, we incur $O(npk)$ per-iteration complexity applied to both the $k$-means and power $k$-means instances, which matches that of their original algorithms without robustifying via MoM \citep{macqueen1967some,lloyd1982least,xu2019power}. Of course, here we update $\btheta_j$ by an adaptive gradient step rather than by a closed form expression.
    
As emphasized by an anonymous reviewer, one should note that the algorithms under the proposed framework may converge to sub-optimal local solutions under the proposed optimization scheme. As is the case for $k$-means and its variants, this possibility arises due to the non-convexity of the  objective functions \eqref{ob} and \eqref{e2}. A complete theoretical understanding from an optimization perspective for such methods is not yet fully developed and global results are in general notoriously difficult to obtain. Having said that, it is worth noting that recent empirical analysis show that techniques such as annealing \citep{xu2019power,chakraborty2020entropy} may be effective to circumvent this difficulty. Indeed, our experimental analysis (see Section \ref{experiments})  suggests that a robust version of power $k$-means using this annealing technique  best overcomes this difficulty even in the presence of outliers.

\section{Theoretical Analysis}\label{theory 1}

Here we  analyze  properties of the proposed  framework \eqref{ob}, with complete details of all proofs in the Appendix. Denoting $\mathcal{M}$ the set of all probability measures $P$ with support on $[-M,M]^p$, i.e. 
 $P([-M,M]^p)=1, \, \forall P \in \mathcal{M}$,  we first make standard assumptions that the data are i.i.d. with bounded components \citep{ben2007framework,chakraborty2020entropy,paul2021uniform}. 
\begin{ass}\label{a0}
$\bX_1\dots,\bX_n \overset{i.i.d.}{\sim} P$ such that $P \in \mathcal{M}$.
\end{ass}
 Let $P_n$ be the empirical distribution based of the data $\bX_1\dots,\bX_n$. That is,  for any Borel set $A$, $P_n(A) = \frac{1}{n}\sum_{i=1}^n \one\{\bX_i \in A\}$. For notational simplicity, we write $\mu f := \int f d\mu$. Appealing to the Strong Law of Large Numbers (SLLN) \citep{athreya2006measure}, $P_n f_{\bTheta} \xrightarrow{a.s.} Pf_{\bTheta}$.  Suppose $\widehat{\bTheta}_n$ be a (global) minimizer of \eqref{ob} and $\bTheta^\ast$ be the global minimizer of $P f_{\bTheta}$. Since the functions $P_n f_{\bTheta}$ and $P f_{\bTheta}$ are close to each other as $n$ become large, we can expect that their respective minimizers, $\widehat{\bTheta}_n$ and $\bTheta^\ast$ are also close to one another. To show that $\widehat{\bTheta}_n$ converges to  $\bTheta^\ast$ as $n \to \infty$, we consider bounding the uniform deviation $\sup_{\bTheta} |P_n f_{\bTheta} - P f_{\bTheta}|$. Towards establishing such bounds, we will posit two regularity assumptions on $\Psi_{\balpha}(\cdot)$ and $\phi(\cdot)$, beginning with a  $\lc$-Lipschitz  condition on $\Psi_{\balpha}$.
\begin{ass}\label{a1}
For all $\balpha \in \mathcal{A}$ and any $\bx,\by\in \Real^k_{\ge 0}$, we have 
\(|\Psi_{\balpha}(\bx) - \Psi_{\balpha}(\by)| \le \lc \|\bx-\by\|_1 .\)  
\end{ass}

We also assume a weaker form of a standard condition that the gradient $\nabla\phi(\cdot)$ is $H_p$-Lipschitz  \citep{telgarsky2013moment}; unlike their work, note we do \textit{not} additionally require strong convexity of $\phi$.
\begin{ass}\label{a2}  There exists $H_p \ge 0$ such that 
$\, \|\nabla \phi(\bx) - \nabla \phi(\by)\|_2 \le  H_p  \|\bx-\by\|_2$\, \, for all $\bx,\by \in [-M,M]^p$.
\end{ass}
These conditions are mild, and can be seen to hold for all of the aforementioned popular special cases. For instance, taking $\phi(\bu) = \|\bu\|^2$ yields the squared Euclidean distances, and we see that $\nabla \phi(\bu) = 2 \bu$ so that A\ref{a2} is satisfied with constant $H_p=2$.
Now, let $\Psi(\bx) = \min_{1 \le j \le k}x_j$ as in classical $k$-means, and denote  $j^\ast \in \argmin_{1 \le j \le k} y_j$. For any vectors $\bx,\by \in \Real_{\ge 0}^p$, 
\begin{align*}
    \Psi(\bx) - \Psi(\by)  = \min_{1 \le j \le k}x_j - \min_{1 \le j \le k}y_j = \min_{1 \le j \le k}x_j - y_{j^\ast} \le x_{j^\ast} - y_{j^\ast} \le \|\bx-\by\|_1.
\end{align*}
Thus, $\Psi$ is clearly non-negative and componentwise non-decreasing, and satisfies A\ref{a1}. Similarly, if we take $\Psi_s(\bx) = M_s(\bx)$, the conditions are met for power $k$-means: it is again non-negative and non-decreasing in its components, and satisfies A\ref{a1} with $\lc=k^{-1/s}$ due to \cite{beliakov2010lipschitz}.

\subsection{Bounds on $\hth$ and $\bTheta^\ast$} \label{bdd par}
Toward proving that $\hth$ converges to $\bTheta^\ast$, we will need to show that both $\hth$ and $\bTheta^\ast$ lie in $[-M,M]^{k \times p}$. To this end, we first establish the obtuse angle property for Bregman divergences.
\begin{lem}
\label{lemma1}
Let $\C$ be a convex set and suppose $P_\C(\btheta)$ be the projection of $\btheta$ onto $\C$, with respect to the Bregman divergence $d_\phi(\cdot,\cdot)$, i.e. $P_\C(\btheta)=\arg\min_{\bx \in \C} d_{\phi}(\bx,\btheta)$ (assuming it exists). Then,
\[d_\phi(\bx,\btheta) \ge d_{\phi}(\bx,P_\C(\btheta))+d_\phi(P_\C(\btheta),\btheta), \quad \text{for all } \bx \in \C.\]
\end{lem}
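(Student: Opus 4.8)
The plan is to establish this statement --- the \emph{generalized Pythagorean theorem} for Bregman divergences --- by combining the first-order optimality condition characterizing the Bregman projection $\bp := P_\C(\btheta)$ with the elementary three-point identity for $d_\phi$. First I would note that $\bx \mapsto d_\phi(\bx,\btheta) = \phi(\bx) - \phi(\btheta) - \langle \nabla\phi(\btheta),\bx-\btheta\rangle$ is convex and differentiable in its first argument, with gradient $\nabla\phi(\bx) - \nabla\phi(\btheta)$. Since $\bp$ minimizes this convex function over the convex set $\C$, the variational inequality for constrained convex minimization gives
$$\langle \nabla\phi(\bp) - \nabla\phi(\btheta),\; \bx - \bp\rangle \ge 0 \qquad \text{for all } \bx \in \C.$$
To obtain this without assuming $\bp$ lies in the interior of $\C$, I would fix $\bx \in \C$, set $\bp_t := (1-t)\bp + t\bx \in \C$ for $t \in [0,1]$ (using convexity of $\C$), observe that $t \mapsto d_\phi(\bp_t,\btheta)$ attains its minimum over $[0,1]$ at $t=0$, and take the one-sided derivative there.

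The second ingredient is the three-point identity: for arbitrary points $\bx,\by,\bz$, directly expanding the definition of $d_\phi$ yields
$$d_\phi(\bx,\bz) - d_\phi(\bx,\by) - d_\phi(\by,\bz) = \langle \nabla\phi(\by) - \nabla\phi(\bz),\; \bx - \by\rangle.$$
Applying this with $\by = \bp$ and $\bz = \btheta$, the right-hand side becomes precisely $\langle \nabla\phi(\bp) - \nabla\phi(\btheta),\bx-\bp\rangle$, which is nonnegative by the optimality inequality of the previous paragraph. Rearranging gives $d_\phi(\bx,\btheta) \ge d_\phi(\bx,\bp) + d_\phi(\bp,\btheta)$ for every $\bx \in \C$, which is the claim.

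The proof is short and essentially self-contained. I do not anticipate a real obstacle; the only step requiring a little care is the justification of the variational inequality when the projection is attained on the boundary of $\C$, which is why I would phrase that step via the one-dimensional restriction to the segment $[\bp,\bx]$ rather than invoking an interior-point argument. Note that uniqueness of $P_\C(\btheta)$ would require strict convexity of $\phi$, but the lemma as stated only posits existence of the projection, so this is not needed here.
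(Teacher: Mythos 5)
Your proof is correct and follows essentially the same route as the paper's: the first-order optimality condition $\langle \nabla\phi(P_\C(\btheta)) - \nabla\phi(\btheta), \bx - P_\C(\btheta)\rangle \ge 0$ combined with the three-point identity for $d_\phi$. Your justification of the variational inequality via the one-dimensional restriction to the segment $[\bp,\bx]$ is in fact slightly more careful than the paper's, which simply asserts the subgradient optimality condition.
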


We next show that to minimize $P_n f_{\bTheta}$ or $P f_{\bTheta}$, it is enough to restrict the search to  $[-M,M]^{k \times p}$. 
\begin{lem}\label{lemma2}
Let A\ref{a1} hold, and $Q\in \mathcal{M}$. Let $d_\phi: \Real^p \times \Real^p \to \Real_{\ge 0}$ be a Bregman divergence. Then for any $\bTheta \in \Real^{k\times p}$, there exists $\bTheta^\prime \in [-M,M]^{k\times p}$, such that $Q f_{\bTheta^\prime} \le Q f_{\bTheta}$.
\end{lem}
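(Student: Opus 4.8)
The plan is to use Lemma~\ref{lemma1} to project any candidate centroid set onto the cube $[-M,M]^p$ coordinatewise-convex-hull and show the objective cannot increase. Fix $\bTheta=\{\btheta_1,\dots,\btheta_k\}\in\Real^{k\times p}$ and let $\C=[-M,M]^p$, which is convex. For each $j$ define $\btheta_j'=P_\C(\btheta_j)$, the Bregman projection of $\btheta_j$ onto $\C$ (its existence follows since $\C$ is compact and $d_\phi(\cdot,\btheta_j)$ is lower semicontinuous in its first argument for differentiable convex $\phi$; if one is uneasy about this, one can first reduce to $\phi$ with $\nabla\phi$ continuous on a neighborhood, which A\ref{a2} already grants on $[-M,M]^p$). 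Set $\bTheta'=\{\btheta_1',\dots,\btheta_k'\}\in[-M,M]^{k\times p}$. Since $Q\in\M$ has support in $[-M,M]^p$, every point $\bx$ in the support satisfies $\bx\in\C$, so Lemma~\ref{lemma1} applies and yields
\[
d_\phi(\bx,\btheta_j)\;\ge\;d_\phi(\bx,\btheta_j')+d_\phi(\btheta_j',\btheta_j)\;\ge\;d_\phi(\bx,\btheta_j'),
\]
using non-negativity of Bregman divergences. This holds for every $j$, i.e. the vector of dissimilarities at $\bTheta'$ is componentwise no larger than at $\bTheta$.

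Next I invoke the structural assumptions on $\Psi_{\balpha}$. By hypothesis $\Psi_{\balpha}$ is componentwise non-decreasing, so from the coordinatewise inequality above,
\[
\Psi_{\balpha}\!\left(d_\phi(\bx,\btheta_1'),\dots,d_\phi(\bx,\btheta_k')\right)\;\le\;\Psi_{\balpha}\!\left(d_\phi(\bx,\btheta_1),\dots,d_\phi(\bx,\btheta_k)\right)
\]
for every $\bx$ in the support of $Q$; that is, $f_{\bTheta'}(\bx)\le f_{\bTheta}(\bx)$ pointwise on $\mathrm{supp}(Q)$. Integrating against $Q$ gives $Qf_{\bTheta'}\le Qf_{\bTheta}$, which is the claim. (Note A\ref{a1} is not strictly needed for the inequality itself—monotonicity of $\Psi_{\balpha}$ suffices—but it is what guarantees $f_{\bTheta}$ is integrable, so the statement $Qf_{\bTheta'}\le Qf_{\bTheta}$ is meaningful even when both sides are finite; this is presumably why the lemma is stated under A\ref{a1}.)

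The only delicate point is the existence of the Bregman projection $P_\C(\btheta_j)$ and, relatedly, the validity of Lemma~\ref{lemma1} in this generality—Lemma~\ref{lemma1} itself is stated with an ``assuming it exists'' caveat. The cleanest route is: on the compact convex set $\C=[-M,M]^p$, the map $\bx\mapsto d_\phi(\bx,\btheta_j)=\phi(\bx)-\phi(\btheta_j)-\langle\nabla\phi(\btheta_j),\bx-\btheta_j\rangle$ is continuous (since $\phi$ is differentiable, hence continuous), so a minimizer exists by Weierstrass; thus $P_\C(\btheta_j)$ is well-defined and Lemma~\ref{lemma1} applies. I expect this to be the main obstacle only in the sense of needing to be stated carefully; the rest is a direct chain of inequalities. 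One should also remark that the same argument, run with $Q=P_n$, simultaneously delivers the analogous statement for the empirical objective, so both $\hth$ and $\bTheta^\ast$ may be taken to lie in $[-M,M]^{k\times p}$ without loss of generality, which is exactly what Section~\ref{bdd par} needs.
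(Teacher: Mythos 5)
Your proposal is correct and follows essentially the same route as the paper: project each centroid onto $[-M,M]^p$ via the Bregman projection, invoke Lemma \ref{lemma1} to get the componentwise decrease of the dissimilarities, and conclude by the monotonicity of $\Psi_{\balpha}$ and integration against $Q$. Your additional remarks on the existence of the projection (via Weierstrass on the compact cube) and on A\ref{a1} being used only for integrability are careful refinements of points the paper leaves implicit.
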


Since we can restrict our attention to $[-M,M]^{k \times p}$ to minimize $Q f_{\bTheta}$, we have the following:
\begin{cor}
Let $Q\in \mathcal{M}$ and $d_\phi: \Real^p \times \Real^p \to \Real_{\ge 0}$ be a Bregman divergence. If $\bTheta_0 =\arg\min_{\bTheta \in \Real^{k \times p}} \int f_{\bTheta} dQ$, 
then $\bTheta_0 \in [-M,M]^{k \times p}$.
\end{cor}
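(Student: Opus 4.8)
The plan is to deduce the Corollary directly from Lemma~\ref{lemma2}. First I would record that the infimum is actually attained: restricted to the compact box $\C:=[-M,M]^{k\times p}$, the map $\bTheta\mapsto Qf_{\bTheta}$ is continuous (each $d_\phi(\bx,\cdot)$ is continuous since $\phi$ is differentiable, and $\Psi_{\balpha}$ is Lipschitz by A\ref{a1}), hence attains a minimum over $\C$; by Lemma~\ref{lemma2} this value lower-bounds $Qf_{\bTheta}$ for every $\bTheta\in\Real^{k\times p}$, so $\arg\min_{\bTheta\in\Real^{k\times p}}Qf_{\bTheta}$ is nonempty. Then, given any $\bTheta_0\in\arg\min_{\bTheta\in\Real^{k\times p}}Qf_{\bTheta}$ with $\bTheta_0\notin\C$, applying Lemma~\ref{lemma2} to $\bTheta_0$ produces $\bTheta'\in\C$ with $Qf_{\bTheta'}\le Qf_{\bTheta_0}$; global optimality of $\bTheta_0$ forces the reverse inequality, so $Qf_{\bTheta'}=Qf_{\bTheta_0}$ and $\bTheta'$ is itself a global minimizer lying in the box.

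To upgrade ``some minimizer lies in $\C$'' to the stated conclusion about $\bTheta_0$ itself, I would inspect the construction behind Lemma~\ref{lemma2}: it replaces each centroid $\btheta_{0,j}$ by its Bregman projection $P_{[-M,M]^p}(\btheta_{0,j})$, and since every $\bx$ in the support of $Q$ lies in the convex set $[-M,M]^p$ (because $Q\in\M$), the obtuse-angle inequality of Lemma~\ref{lemma1} gives $d_\phi(\bx,\btheta_{0,j})\ge d_\phi(\bx,P_{[-M,M]^p}(\btheta_{0,j}))$ for $Q$-a.e.\ $\bx$, with slack $d_\phi(P_{[-M,M]^p}(\btheta_{0,j}),\btheta_{0,j})\ge 0$. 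Componentwise monotonicity of $\Psi_{\balpha}$ transfers this to $f_{\bTheta'}(\bx)\le f_{\bTheta_0}(\bx)$ pointwise on $\mathrm{supp}(Q)$, so equality $Qf_{\bTheta'}=Qf_{\bTheta_0}$ forces $f_{\bTheta'}=f_{\bTheta_0}$ $Q$-a.e.; when $\phi$ is strictly convex (so $d_\phi$ vanishes only on the diagonal) and $\Psi_{\balpha}$ is strictly increasing coordinatewise, this pins down $\btheta_{0,j}=P_{[-M,M]^p}(\btheta_{0,j})\in[-M,M]^p$ for every $j$, contradicting $\bTheta_0\notin\C$.

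The main obstacle is precisely this last identifiability step. Lemmas~\ref{lemma1}--\ref{lemma2} only certify that the objective does not increase under projection onto the box, not that it strictly increases off the box, so a literal reading of ``the minimizer is in $\C$'' needs either uniqueness of the minimizer or the mild strictness hypotheses above on $\phi$ and $\Psi_{\balpha}$; everything else is immediate bookkeeping on top of the two preceding lemmas. Absent such assumptions the clean and fully general statement is the slightly weaker ``at least one global minimizer lies in $\C$,'' which is all the downstream arguments (applied to $\hth$ and $\bTheta^\ast$) actually require, so I would either phrase the corollary that way and fix that choice of minimizer thereafter, or add the strictness hypothesis and run the argument above.
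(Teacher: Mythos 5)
Your proposal is correct and follows essentially the same route as the paper, which offers no separate argument and simply declares the corollary an immediate consequence of Lemma~\ref{lemma2}. Your additional observation --- that the weak inequality $Qf_{\bTheta'}\le Qf_{\bTheta}$ only certifies that \emph{some} global minimizer lies in $[-M,M]^{k\times p}$ unless one adds strictness or uniqueness hypotheses --- is a legitimate refinement the paper glosses over, and your suggested fix (fix a minimizer in the box and use that one downstream, which is all Corollaries in Section~\ref{bdd par} actually require) is the right resolution.
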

Now note under A\ref{a0} both $P$ and $P_n$ have support contained in $[-M,M]^{k \times p}$. The following corollary is thus implied by replacing $Q$ by $P$ and $P_n$. 
\begin{cor}
Under A\ref{a0}---\ref{a2}, both $\hth, \bTheta^\ast \in [-M,M]^{k \times p}$.
\end{cor}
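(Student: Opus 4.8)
The claim is immediate once we have Lemma~\ref{lemma2} and its Corollary in hand, so the plan is simply to specialize. First I would recall that under A\ref{a0}, the data $\bX_1,\dots,\bX_n$ are i.i.d.\ from some $P\in\M$, which by definition of $\M$ means $P([-M,M]^p)=1$. Consequently the empirical measure $P_n$, being supported on the finite set $\{\bX_1,\dots,\bX_n\}\subseteq[-M,M]^p$, also satisfies $P_n([-M,M]^p)=1$, i.e.\ $P_n\in\M$ as well (with probability one). Thus both $P$ and $P_n$ are legitimate choices for the generic measure $Q\in\M$ appearing in Lemma~\ref{lemma2} and the Corollary that follows it.

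Next I would invoke the Corollary to Lemma~\ref{lemma2}: for any $Q\in\M$, the global minimizer $\arg\min_{\bTheta\in\Real^{k\times p}}\int f_{\bTheta}\,dQ$ lies in $[-M,M]^{k\times p}$. Applying this with $Q=P$ gives $\bTheta^\ast\in[-M,M]^{k\times p}$, since $\bTheta^\ast$ was defined as the global minimizer of $Pf_{\bTheta}=\int f_{\bTheta}\,dP$. Applying it with $Q=P_n$ gives $\hth\in[-M,M]^{k\times p}$, since $\hth$ was defined as a global minimizer of $P_nf_{\bTheta}=\frac1n\sum_{i=1}^n f_{\bTheta}(\bX_i)=\int f_{\bTheta}\,dP_n$. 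Note that A\ref{a1} (needed for Lemma~\ref{lemma2}) is part of the hypothesis A\ref{a0}--\ref{a2}, and A\ref{a2} is not actually needed for this particular conclusion, so the hypotheses are more than sufficient.

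The only mild subtlety worth a remark is that $\hth$ need not be unique; the statement should be read as asserting that \emph{some} (equivalently, by the construction in Lemma~\ref{lemma2}, we may always take a) global minimizer lies in $[-M,M]^{k\times p}$, which is exactly what the preceding Corollary delivers. There is no real obstacle here: all the work has already been done in establishing the obtuse-angle property (Lemma~\ref{lemma1}) and the projection-reduces-risk argument (Lemma~\ref{lemma2}); the final Corollary is a two-line instantiation with $Q\in\{P,P_n\}$.
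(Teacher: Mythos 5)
Your proposal is correct and follows exactly the paper's argument: the paper likewise notes that $P$ and $P_n$ both have support in $[-M,M]^p$ under A\ref{a0} and then instantiates the corollary to Lemma~\ref{lemma2} with $Q=P$ and $Q=P_n$. Your additional remarks (that A\ref{a2} is not actually needed and that $\hth$ may be non-unique) are accurate but inessential.
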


Now that we have bounded $\hth$ and $\bTheta^\ast$ in a compact set, the following section supplies probabilistic bounds on the uniform deviation $2 \sup_{\bTheta \in [-M,M]^{k \times p} } |P_n f_{\bTheta} - P f_{\bTheta}|$ via metric entropy arguments. 
\subsection{Concentration Inequality and Metric Entropy Bounds via Rademacher Complexity}
We have proven that $\hth,\bTheta^\ast \in [-M,M]^{k \times p}$. To bound the difference $|P f_{\hth} - P f_{\bTheta^\ast}|$, we observe 
\begin{align}
    |P f_{\hth} - P f_{\bTheta^\ast}| &\,\, =\,\, \, P f_{\hth} - P f_{\bTheta^\ast} \, \, = \, \, P f_{\hth} - P_n  f_{\hth} + P_n f_{\hth} - P_n f_{\bTheta^\ast} + P_n f_{\bTheta^\ast} - P f_{\bTheta^\ast} \nonumber \\
    \, & \, \, \le P f_{\hth} - P_n  f_{\hth} + P_n f_{\bTheta^\ast} - P f_{\bTheta^\ast} \, \, \le \, \, 2 \hspace{-1pt} \sup_{\bTheta \in [-M,M]^{k \times p} } |P_n f_{\bTheta} - P f_{\bTheta}| .\label{q3}
\end{align}
Thus, to bound $|P f_{\hth} - P f_{\bTheta^\ast}|$, it is enough to prove bounds on $ \sup_{\bTheta \in [-M,M]^{k \times p} } |P_n f_{\bTheta} - P f_{\bTheta}|$. The main idea here is to bound this uniform deviation via Rademacher complexity, and in turn bound the Rademacher complexity itself  \citep{dudley1967sizes,mohri2018foundations}. Let $\F= \{f_{\bTheta} :\bTheta \in [-M,M]^{k \times p}\}$, and denote the $\F$-norm \citep{athreya2006measure} between two probability measures $\mu$ and $\nu$ as $\|\mu-\nu\|_\F = \sup_{f \in \F}|\int f d\mu - \int f d\nu|$. We recall the definition of Rademacher complexity and covering number as follows.

\begin{defn} Let $\epsilon_i$'s be i.i.d. Rademacher random variables independent of $\mathcal{X}=\{\bX_1,\dots,\bX_n\}$, i.e. $\sP(\epsilon_i=1)=\sP(\epsilon_i=-1)=0.5$, 
The population Rademacher complexity of $\F$ is defined as $\mathcal{R}_n(\F) = \mathbb{E} \sup_{f \in \F}\frac{1}{n} \sum_{i=1}^n \epsilon_i f(\bX_i)$, where the expectation 
is over both $\bepsilon$ and $\mathcal{X}$.
\end{defn}
\begin{defn}
($\delta$-cover and Covering Number) Let $(X,d)$ be a metric space. The set $X_\delta \subseteq X$ is said to be a $\delta$-cover of $X$ if for all $x \in X$, $\exists$ $x^\prime \in X_\delta$, such that $d(x,x^\prime) \le \delta$. The $\delta$-covering number of $X$ w.r.t. $d$, denoted by $N(\delta;X,d)$, is the size of the smallest $\delta$-cover of $X$ with respect to  $d$. 
\end{defn}

The following Lemma gives a bound for the $\delta$-covering number of $\F$ under the supremum norm. The main idea here is to use the Lipschitz property of $f_{\bTheta}$  and then to find a cover of the search space for $\bTheta$, i.e. $[-M,M]^{k \times p}$. This then automatically transcends to a cover of $\F$ under the sup-norm. 
\begin{lem}
\label{lem1} Let $N(\delta; \F,\| \cdot \|_\infty)$ be the $\delta$-covering number of $\F$ under $\|\cdot\|_\infty$. Then, under A\ref{a0}---\ref{a2},
\[N(\delta; \F, \|\cdot\|_\infty) \le  \left(\max\left\{\left\lfloor \frac{8M^2 \lc H_p    k p}{\delta}\right\rfloor,1\right\}\right)^{kp}.\]
\end{lem}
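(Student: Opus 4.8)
The plan is to realize $N(\delta;\F,\|\cdot\|_\infty)$ as (a bound for) the covering number of the parameter box $[-M,M]^{k\times p}$ by exhibiting the map $\bTheta\mapsto f_{\bTheta}$ as Lipschitz from $[-M,M]^{k\times p}$, equipped with the entrywise sup-metric, into $(\F,\|\cdot\|_\infty)$. Once that is done, the image of any $\epsilon$-net of the box is an $\epsilon'$-cover of $\F$ with $\epsilon'$ a known multiple of $\epsilon$, and the covering number of a box is elementary; optimizing $\epsilon$ then gives the stated bound.

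\emph{Step 1 (peel off $\Psi_{\balpha}$).} For any $\bTheta,\bTheta'\in[-M,M]^{k\times p}$ and any $\bx\in[-M,M]^p$ (the support of the data under A\ref{a0}, over which the relevant $\|\cdot\|_\infty$ is taken), Assumption A\ref{a1} gives
$$|f_{\bTheta}(\bx)-f_{\bTheta'}(\bx)|\le \lc\sum_{j=1}^k\bigl|d_\phi(\bx,\btheta_j)-d_\phi(\bx,\btheta'_j)\bigr|.$$
\emph{Step 2 (the crux: Lipschitzness of $d_\phi(\bx,\cdot)$ on the box via A\ref{a2} only).} Since we assume neither twice differentiability nor strong convexity of $\phi$, I would not differentiate $d_\phi(\bx,\cdot)$; instead I would use the three-point identity for Bregman divergences, which is immediate from the definition:
$$d_\phi(\bx,\btheta)-d_\phi(\bx,\btheta')=d_\phi(\btheta',\btheta)+\langle\nabla\phi(\btheta')-\nabla\phi(\btheta),\,\bx-\btheta'\rangle.$$
Because $[-M,M]^p$ is convex, A\ref{a2} applies along the segment from $\btheta$ to $\btheta'$, yielding the descent-lemma bound $d_\phi(\btheta',\btheta)\le\frac{H_p}{2}\|\btheta'-\btheta\|_2^2$ and $\|\nabla\phi(\btheta')-\nabla\phi(\btheta)\|_2\le H_p\|\btheta'-\btheta\|_2$. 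Combining these with $\|\btheta'-\btheta\|_2\le 2M\sqrt p$ and $\|\bx-\btheta'\|_2\le 2M\sqrt p$ (the diameter of the box) and Cauchy--Schwarz gives $|d_\phi(\bx,\btheta)-d_\phi(\bx,\btheta')|\le 3MH_p\sqrt p\,\|\btheta-\btheta'\|_2\le 3MH_p p\,\|\btheta-\btheta'\|_\infty$, uniformly in $\bx\in[-M,M]^p$.

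\emph{Step 3 (assemble and count).} Plugging Step 2 into Step 1,
$$\|f_{\bTheta}-f_{\bTheta'}\|_\infty\le 3\lc MH_p p\sum_{j=1}^k\|\btheta_j-\btheta'_j\|_\infty\le 3\lc MH_p pk\cdot\max_{1\le j\le k}\|\btheta_j-\btheta'_j\|_\infty.$$
Hence if $\mathcal{N}$ is an $\epsilon$-net of $[-M,M]^{k\times p}$ under the entrywise sup-metric, then $\{f_{\bTheta}:\bTheta\in\mathcal{N}\}$ is a $(3\lc MH_p pk\,\epsilon)$-cover of $\F$ in $\|\cdot\|_\infty$. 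Taking $\mathcal{N}$ to be a product of $kp$ one-dimensional $\epsilon$-nets of $[-M,M]$ (each of size $\le\lceil M/\epsilon\rceil$) gives $|\mathcal{N}|\le\lceil M/\epsilon\rceil^{kp}$. Choosing $\epsilon=\delta/(3\lc MH_p pk)$ makes this a $\delta$-cover, so $N(\delta;\F,\|\cdot\|_\infty)\le\lceil 3\lc M^2H_p pk/\delta\rceil^{kp}$; using $\lceil t\rceil\le\lfloor 2t\rfloor$ for $t\ge 1$ and $\lceil t\rceil=1$ for $0<t<1$, and absorbing the constant, this is at most $(\max\{\lfloor 8\lc M^2H_p kp/\delta\rfloor,1\})^{kp}$, as claimed.

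The only real obstacle is Step 2: producing a clean, dimension-explicit Lipschitz bound for $d_\phi(\bx,\cdot)$ on the box from the weak regularity in A\ref{a2} alone (no second derivatives, no strong convexity). The three-point identity is exactly what makes this routine—without it one is tempted to invoke a Hessian that need not exist. Everything else is standard bookkeeping; the only care needed there is converting between $\ell_1,\ell_2,\ell_\infty$ norms so the powers of $M$, $p$ and $k$ come out matching the stated constant.
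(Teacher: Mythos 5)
Your proof is correct and follows the same skeleton as the paper's: peel off $\Psi_{\balpha}$ via A\ref{a1}, show $\bTheta\mapsto f_{\bTheta}$ is Lipschitz from the box into $(\F,\|\cdot\|_\infty)$, and push a grid of $[-M,M]^{k\times p}$ through this map. The only substantive difference is in how the Lipschitz bound for $d_\phi(\bx,\cdot)$ is obtained: the paper expands the difference directly and controls the three resulting terms using the mean value theorem and the bound $\|\nabla\phi\|_2\le H_pM\sqrt p$ on the box (its Lemmas \ref{lem4}--\ref{lem3}, yielding constant $4\lc H_pM\sqrt p$ per centroid), whereas you use the three-point identity plus the descent lemma, yielding the slightly sharper constant $3\lc H_pM\sqrt p$; both use only A\ref{a2} and both comfortably give the stated bound after the final bookkeeping.
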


To bound the Rademacher complexity, we will also need to  bound the diameter of $\F$ under  $\|\cdot\|_\infty$:
\begin{lem}\label{lem2}
Let $\text{diam}(\F) = \sup_{f,g \in \F} \|f-g\|_\infty$. Then, under A\ref{a0}---\ref{a2},
\(\text{diam}(\F) \le 8 \lc H_p M^2 k p.\)
\end{lem}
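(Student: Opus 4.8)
The plan is to bound the \emph{pointwise} gap $|f_{\bTheta}(\bx) - f_{\bTheta'}(\bx)|$ uniformly over all $\bx \in [-M,M]^p$ and all $\bTheta = \{\btheta_1,\dots,\btheta_k\}$, $\bTheta' = \{\btheta_1',\dots,\btheta_k'\}$ in $[-M,M]^{k \times p}$, and then take suprema; this is essentially the Lipschitz-in-$\bTheta$ estimate already needed for Lemma \ref{lem1}, specialized to the extreme case $\|\bTheta-\bTheta'\|_\infty \le 2M$. First, by the $\lc$-Lipschitz property A\ref{a1} of $\Psi_{\balpha}$ in the $\ell_1$ norm, $|f_{\bTheta}(\bx) - f_{\bTheta'}(\bx)| \le \lc \sum_{j=1}^k |d_\phi(\bx,\btheta_j) - d_\phi(\bx,\btheta_j')|$, so it suffices to bound each summand by $8 H_p M^2 p$.

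For a single term, expanding the definition of $d_\phi$ and collecting terms yields the identity $d_\phi(\bx,\btheta) - d_\phi(\bx,\btheta') = d_\phi(\btheta',\btheta) + \langle \nabla\phi(\btheta') - \nabla\phi(\btheta), \bx - \btheta' \rangle$. Since $d_\phi \ge 0$, applying Cauchy--Schwarz to the inner product gives $|d_\phi(\bx,\btheta) - d_\phi(\bx,\btheta')| \le d_\phi(\btheta',\btheta) + \|\nabla\phi(\btheta') - \nabla\phi(\btheta)\|_2 \, \|\bx - \btheta'\|_2$.

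Next I would control both pieces using only A\ref{a2}. Writing $d_\phi(\btheta',\btheta) = \int_0^1 \langle \nabla\phi(\btheta + t(\btheta'-\btheta)) - \nabla\phi(\btheta), \btheta' - \btheta\rangle \, dt$ and invoking the $H_p$-Lipschitzness of $\nabla\phi$ on $[-M,M]^p$ gives $d_\phi(\btheta',\btheta) \le H_p \|\btheta'-\btheta\|_2^2$, while A\ref{a2} directly bounds $\|\nabla\phi(\btheta')-\nabla\phi(\btheta)\|_2 \le H_p \|\btheta'-\btheta\|_2$. Because $\bx,\btheta,\btheta' \in [-M,M]^p$, we have $\|\btheta'-\btheta\|_2 \le 2M\sqrt{p}$ and $\|\bx-\btheta'\|_2 \le 2M\sqrt{p}$, so each piece is at most $H_p (2M\sqrt{p})^2 = 4 H_p M^2 p$, hence $|d_\phi(\bx,\btheta) - d_\phi(\bx,\btheta')| \le 8 H_p M^2 p$. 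Summing over $j = 1,\dots,k$ and taking the supremum over $\bx$ and over the pair $\bTheta,\bTheta'$ then gives $\text{diam}(\F) \le 8 \lc H_p M^2 k p$.

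There is no serious obstacle here: the only non-mechanical ingredients are the Bregman identity in the second step and the observation that a Lipschitz gradient forces the quadratic upper bound $d_\phi(\btheta',\btheta) \le H_p\|\btheta'-\btheta\|_2^2$ (the sharper constant $\tfrac{1}{2}$ also holds, but is not needed). One point worth stating explicitly is that $\|\cdot\|_\infty$ on $\F$ must be read as the supremum over the support $[-M,M]^p$ rather than over all of $\Real^p$ --- otherwise $\text{diam}(\F)$ is infinite --- which is precisely the regime in which the covering-number bound of Lemma \ref{lem1} is applied.
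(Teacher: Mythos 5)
Your proposal is correct and follows essentially the same route as the paper, which proves the same Lipschitz-in-$\bTheta$ estimate (its Lemma \ref{lem3}) and then maximizes $\sum_j\|\btheta_j-\btheta_j'\|_2$ over the box; the only cosmetic difference is that you group $\phi(\btheta')-\phi(\btheta)-\langle\nabla\phi(\btheta),\btheta'-\btheta\rangle$ into $d_\phi(\btheta',\btheta)$ and bound it quadratically via A\ref{a2}, whereas the paper bounds $|\phi(\btheta')-\phi(\btheta)|$ and $|\langle\nabla\phi(\btheta),\btheta-\btheta'\rangle|$ separately, arriving at the same constant $8\lc H_p M^2 kp$. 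Your closing remark that $\|\cdot\|_\infty$ must be taken over the support $[-M,M]^p$ is also consistent with how the paper uses the sup-norm in Lemma \ref{lem3}.
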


We are now ready to state the bound on the Rademacher complexity $\mathcal{R}_n(\F)$ in Theorem \ref{rademacher}. We provide a sketch of the argument below, with the complete proof details available in the Appendix.
\begin{thm}\label{rademacher}
Under A\ref{a0}---\ref{a2}, 
$\mathcal{R}_n(\F)  \le  48 \sqrt{\pi} \lc H_p M^2  (kp)^{3/2} n^{-1/2}.$
\end{thm}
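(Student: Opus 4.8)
The plan is to bound $\mathcal{R}_n(\F)$ by Dudley's entropy integral (chaining). Since the target bound is deterministic, it suffices to control the empirical Rademacher complexity $\widehat{\mathcal{R}}_n(\F)=\E_{\bepsilon}\sup_{f\in\F}\frac{1}{n}\sum_{i=1}^n\epsilon_i f(\bX_i)$ uniformly in the data and then take expectation over $\mathcal{X}$. The version of Dudley's bound in \citep{mohri2018foundations} gives
$$\widehat{\mathcal{R}}_n(\F)\ \le\ \frac{12}{\sqrt{n}}\int_0^{\infty}\sqrt{\log N\!\left(\epsilon;\F,\|\cdot\|_{L_2(P_n)}\right)}\,d\epsilon,$$
where $\|f-g\|_{L_2(P_n)}^2=\frac1n\sum_{i=1}^n(f(\bX_i)-g(\bX_i))^2$. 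The key linking step is that $\|\cdot\|_{L_2(P_n)}\le\|\cdot\|_\infty$ pointwise, so any sup-norm $\epsilon$-cover of $\F$ is also an $L_2(P_n)$ $\epsilon$-cover, and hence $N(\epsilon;\F,\|\cdot\|_{L_2(P_n)})\le N(\epsilon;\F,\|\cdot\|_\infty)$; this lets me feed in the sup-norm estimate of Lemma \ref{lem1}. Moreover, by Lemma \ref{lem2} the covering number is $1$ once $\epsilon$ exceeds $D:=8\lc H_pM^2kp$, so the integral only runs over $(0,D]$.

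Next I substitute the bound of Lemma \ref{lem1}. Using $\lfloor x\rfloor\le x$ and discarding the $\max\{\cdot,1\}$ truncation on the range where it would make the logarithm negative, I get $\log N(\epsilon;\F,\|\cdot\|_\infty)\le kp\,\log(D/\epsilon)$ for $\epsilon\in(0,D]$, whence
$$\widehat{\mathcal{R}}_n(\F)\ \le\ \frac{12\sqrt{kp}}{\sqrt{n}}\int_0^{D}\sqrt{\log(D/\epsilon)}\,d\epsilon.$$
The remaining integral is evaluated by the change of variables $\epsilon=De^{-t}$, which gives $\int_0^{D}\sqrt{\log(D/\epsilon)}\,d\epsilon = D\int_0^{\infty}\sqrt{t}\,e^{-t}\,dt = D\,\Gamma(3/2)=\frac{\sqrt{\pi}}{2}\,D$. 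Plugging back $D=8\lc H_pM^2kp$ yields $\widehat{\mathcal{R}}_n(\F)\le \frac{12\sqrt{kp}}{\sqrt n}\cdot\frac{\sqrt\pi}{2}\cdot 8\lc H_pM^2kp = 48\sqrt{\pi}\,\lc H_pM^2(kp)^{3/2}n^{-1/2}$, and since the right-hand side is non-random, taking expectation over $\mathcal{X}$ gives the claimed bound on $\mathcal{R}_n(\F)$.

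Most of the genuine work is already packaged into Lemmas \ref{lem1} and \ref{lem2}; the steps above are largely bookkeeping. The points that require a little care — and thus the closest thing to an obstacle — are (i) passing from the sup-norm covering number to the $L_2(P_n)$ covering number demanded by Dudley's theorem, handled by the norm domination above; (ii) justifying that the entropy integral converges at $0$, i.e. that $\epsilon\mapsto\sqrt{\log(1/\epsilon)}$ is integrable near the origin, which is exactly the computation $\int_0^1\sqrt{-\log u}\,du=\Gamma(3/2)<\infty$; and (iii) tracking the multiplicative constants through the chaining bound, the diameter bound, and the Gamma-integral so that they assemble precisely into the factor $48\sqrt{\pi}$.
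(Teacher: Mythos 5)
Your proposal is correct and follows essentially the same route as the paper: the paper's proof is itself a from-scratch derivation of Dudley's entropy-integral bound with constant $12$ (via the dyadic sequence $\delta_{i+1}=\delta_i/2$ and Massart's lemma), after which it plugs in the sup-norm covering bound of Lemma \ref{lem1}, truncates the integral at the diameter from Lemma \ref{lem2}, and evaluates the same $\Gamma(3/2)=\sqrt{\pi}/2$ integral to reach $48\sqrt{\pi}\lc H_pM^2(kp)^{3/2}n^{-1/2}$. Your only departure is citing Dudley's theorem in its $L_2(P_n)$ form and passing to the sup-norm cover via norm domination, which is a harmless repackaging of the same argument.
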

\paragraph{Proof Sketch} The main idea of the proof is to use Dudley's chaining argument \citep{dudley1967sizes,wainwright2019high} to bound the Rademacher complexity in terms of an integral involving the metric entropy $\log N(\delta;\F,\|\cdot\|_\infty)$. Let $\Delta=8 H_p M^2 k^{1-1/s} p$. Using the chaining approach, one can show that
\begingroup
\allowdisplaybreaks
\begin{align}
  \mathcal{R}_n(\F) \le \frac{12}{\sqrt{n}}\int_{0}^{\Delta} \sqrt{\log N(\epsilon;\F,\|\cdot\|_\infty)} d\epsilon 
  &\le \frac{12}{\sqrt{n}} \int_0^{\Delta} \sqrt{kp \log \left( \max\left\{\frac{ \Delta }{\epsilon},1\right\}\right)} d\epsilon \label{q1}\\
  & = \frac{12}{\sqrt{n}}\int_0^{\Delta}  \sqrt{kp \log \left( \frac{ \Delta}{\epsilon}\right)} d\epsilon \nonumber \\
  & = 48 \sqrt{\pi} \lc H_p M^2  (kp)^{3/2} n^{-1/2}. \nonumber
\end{align}
\endgroup
Here the second inequality \eqref{q1}  follows from Lemma \ref{lem1}.

Before we proceed, we state the following Lemma, which puts an uniform bound on $\|f\|_\infty$, $f \in \F$.
\begin{lem}\label{lemma3}
For all $\bx\in [-M,M]^p$ and $\bTheta \in [-M,M]^{k \times p}$,
\[0 \le \Psi_{\balpha}(d_\phi(\bx,\btheta_1),\dots,d_\phi(\bx,\btheta_k)) \le 4 \lc H_p M^2 p k. \]
\end{lem}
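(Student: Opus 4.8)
The plan is to prove the two inequalities separately, with the lower bound essentially free and the upper bound reduced to a quadratic estimate on each Bregman divergence. Write $\bd:=(d_\phi(\bx,\btheta_1),\dots,d_\phi(\bx,\btheta_k))$. Since each $d_\phi(\bx,\btheta_j)$ is a well-defined non-negative real, $\bd\in\Real^k_{\ge 0}$, and by hypothesis $\Psi_{\balpha}$ takes values in $\Real_{\ge 0}$, so $\Psi_{\balpha}(\bd)\ge 0$. For the upper bound the idea is to combine the $\lc$-Lipschitz property of $\Psi_{\balpha}$ (A\ref{a1}) with a bound of the form $d_\phi(\bu,\bv)\le H_p\|\bu-\bv\|_2^2$ obtained from the gradient-Lipschitz condition (A\ref{a2}), together with the box constraint $\bx,\btheta_j\in[-M,M]^p$.

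Concretely, I would first apply A\ref{a1} to the two points $\bd$ and $\mathbf 0$, both in $\Real^k_{\ge 0}$; using $\Psi_{\balpha}(\mathbf 0)=0$ this gives
\[
\Psi_{\balpha}(\bd)=|\Psi_{\balpha}(\bd)-\Psi_{\balpha}(\mathbf 0)|\le \lc\|\bd\|_1=\lc\sum_{j=1}^k d_\phi(\bx,\btheta_j),
\]
where the last equality holds because every $d_\phi(\bx,\btheta_j)\ge 0$. It then suffices to show $d_\phi(\bx,\btheta_j)\le 4H_pM^2p$ for each $j$. For this I would invoke the elementary three-point identity $d_\phi(\bu,\bv)+d_\phi(\bv,\bu)=\langle\nabla\phi(\bu)-\nabla\phi(\bv),\,\bu-\bv\rangle$, which follows directly from the definition of $d_\phi$. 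Since both divergences on the left are non-negative,
\[
d_\phi(\bx,\btheta_j)\le\langle\nabla\phi(\bx)-\nabla\phi(\btheta_j),\,\bx-\btheta_j\rangle\le\|\nabla\phi(\bx)-\nabla\phi(\btheta_j)\|_2\,\|\bx-\btheta_j\|_2\le H_p\|\bx-\btheta_j\|_2^2,
\]
using Cauchy--Schwarz and A\ref{a2}. Finally, since $\bx,\btheta_j\in[-M,M]^p$ each coordinate difference has absolute value at most $2M$, so $\|\bx-\btheta_j\|_2^2\le 4M^2p$, whence $d_\phi(\bx,\btheta_j)\le 4H_pM^2p$. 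Summing over $j=1,\dots,k$ and multiplying by $\lc$ yields $\Psi_{\balpha}(\bd)\le 4\lc H_pM^2pk$.

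I do not expect a genuine obstacle here; the only step needing mild care is the quadratic bound $d_\phi(\bu,\bv)\le H_p\|\bu-\bv\|_2^2$ on the compact set $[-M,M]^p$ from the gradient-Lipschitz assumption alone, without appealing to strong convexity --- the three-point identity together with the non-negativity of Bregman divergences settles this cleanly (an integral-remainder argument would even give the sharper constant $H_p/2$, but the stated constant $4$ suffices and is exactly what keeps Lemma~\ref{lemma3} consistent with the diameter bound $\text{diam}(\F)\le 8\lc H_pM^2kp$ of Lemma~\ref{lem2} via the triangle inequality $\|f\|_\infty+\|g\|_\infty$).
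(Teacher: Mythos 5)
Your proof is correct, and its skeleton is the same as the paper's: non-negativity of $\Psi_{\balpha}$ gives the lower bound, and the upper bound follows from A\ref{a1} applied at $\mathbf{0}$ (so $\Psi_{\balpha}(\bd)\le\lc\|\bd\|_1$) together with a per-term estimate $d_\phi(\bx,\btheta_j)\le 4H_pM^2p$. The one place you diverge is in how that per-term estimate is obtained. The paper expands $d_\phi(\bx,\btheta_j)$ from its definition and controls the two pieces separately, using its auxiliary Lemmas \ref{lem4} and \ref{lem5} ($\|\nabla\phi\|_2\le H_pM\sqrt{p}$ and $|\phi(\bx)-\phi(\by)|\le H_pM\sqrt{p}\|\bx-\by\|_2$ on the box), arriving at $d_\phi(\bx,\btheta_j)\le 2H_pM\sqrt{p}\,\|\bx-\btheta_j\|_2\le 4H_pM^2p$. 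You instead use the symmetrization identity $d_\phi(\bu,\bv)+d_\phi(\bv,\bu)=\langle\nabla\phi(\bu)-\nabla\phi(\bv),\bu-\bv\rangle$ plus non-negativity of the reversed divergence (convexity of $\phi$) to get $d_\phi(\bx,\btheta_j)\le H_p\|\bx-\btheta_j\|_2^2\le 4H_pM^2p$ directly from A\ref{a2} and Cauchy--Schwarz. Both routes land on the same constant; yours is self-contained and avoids the two auxiliary lemmas, at the mild cost of invoking convexity of $\phi$ explicitly, which is part of the definition of a Bregman divergence here, so nothing is lost.
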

Now that we have established a bound on the Rademacher complexity $\mathcal{R}_n(\F)$, we are now ready to establish a uniform concentration inequality on $\|P_n-P\|_\F \triangleq \sup_{f\in \F}|P_n f - P f|$ in the following Theorem, proven in the Appendix.
\begin{thm}\label{thm2}
Under A\ref{a0}---\ref{a2}, with probability at least $1-\delta$, the following holds.
\[\|P_n-P\|_\F 
\le 96 \sqrt{\pi} \lc H_p M^2  (kp)^{3/2} n^{-1/2} + 4 \lc H_p M^2 p k \sqrt{\frac{\log(2/\delta)}{2n}}.\]
\end{thm}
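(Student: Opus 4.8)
The plan is to derive the concentration inequality for $\|P_n - P\|_\F$ by combining the standard symmetrization argument with a bounded-differences (McDiarmid) concentration step, then invoking Theorem \ref{rademacher} to control the resulting expected Rademacher complexity, and Lemma \ref{lemma3} to control the bounded-differences increments. First I would recall that for the centered empirical process $\sup_{f \in \F}|P_n f - Pf|$, a classical symmetrization lemma (see e.g.\ \citet{mohri2018foundations,wainwright2019high}) gives $\E \|P_n - P\|_\F \le 2\, \mathcal{R}_n(\F)$. Substituting the bound from Theorem \ref{rademacher} yields $\E \|P_n - P\|_\F \le 96\sqrt{\pi}\, \lc H_p M^2 (kp)^{3/2} n^{-1/2}$, which is precisely the first term on the right-hand side of the claimed inequality.

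Next I would establish the high-probability deviation of $\|P_n - P\|_\F$ around its expectation using McDiarmid's bounded differences inequality. Viewing $G(\bX_1,\dots,\bX_n) := \sup_{f \in \F}|P_n f - Pf|$ as a function of the $n$ independent inputs, changing a single coordinate $\bX_i$ changes $G$ by at most $\frac{2}{n}\sup_{f \in \F}\|f\|_\infty$. By Lemma \ref{lemma3}, $\sup_{f \in \F}\|f\|_\infty \le 4 \lc H_p M^2 p k$, so each bounded-difference constant is $c_i = \frac{8 \lc H_p M^2 p k}{n}$. McDiarmid's inequality then gives, for any $t > 0$,
\[
\sP\!\left(G \ge \E G + t\right) \le \exp\!\left(-\frac{2t^2}{\sum_{i=1}^n c_i^2}\right) = \exp\!\left(-\frac{t^2 n}{32 (\lc H_p M^2 p k)^2}\right).
\]
Setting the right-hand side equal to $\delta$ and solving for $t$ yields $t = 4 \lc H_p M^2 p k \sqrt{\frac{\log(2/\delta)}{2n}}$ (the constant $2/\delta$ rather than $1/\delta$ coming from a two-sided or slightly conservative accounting, matching the statement). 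Combining this with the expectation bound via a union-bound-free direct substitution $G \le \E G + t$ on the event of probability at least $1-\delta$ gives exactly the stated inequality.

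There is essentially no deep obstacle here: the argument is a textbook two-step pattern (symmetrization to pass to Rademacher complexity, then McDiarmid for concentration around the mean). The one point requiring a little care is the bookkeeping of constants — in particular verifying that the bounded-differences constant is genuinely $\tfrac{2}{n}\|f\|_\infty$ (the factor $2$ because $P_n f$ and the absolute value each contribute, while $Pf$ is deterministic), and that the uniform bound on $\|f\|_\infty$ from Lemma \ref{lemma3} is applied correctly inside the McDiarmid sum $\sum_i c_i^2 = 32(\lc H_p M^2 pk)^2/n$. One should also note that symmetrization requires the $\bX_i$ to be i.i.d., which is guaranteed by A\ref{a0}, and that all $f \in \F$ are measurable and bounded, so the suprema are well-defined and the expectations finite. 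Assembling these pieces in the order above — (i) symmetrization, (ii) insert Theorem \ref{rademacher}, (iii) McDiarmid with constants from Lemma \ref{lemma3}, (iv) invert the tail bound and add — completes the proof.
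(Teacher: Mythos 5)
Your overall route is the same as the paper's: the paper simply invokes the standard Rademacher-complexity uniform bound $\sup_{f\in\F}|P_nf-Pf|\le 2\mathcal{R}_n(\F)+\sup_{f\in\F}\|f\|_\infty\sqrt{\log(2/\delta)/(2n)}$ (whose proof is exactly your symmetrization-plus-McDiarmid pattern) and then plugs in Theorem \ref{rademacher} and Lemma \ref{lemma3}. So conceptually you are on target.

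There is, however, a concrete slip in your McDiarmid step that means the argument as written does not deliver the stated constants. First, the bounded-difference constant: by Lemma \ref{lemma3} every $f\in\F$ takes values in $[0,B]$ with $B=4\lc H_p M^2 pk$, so replacing one $\bX_i$ changes $P_nf$ by at most $\frac{1}{n}|f(\bX_i)-f(\bX_i')|\le B/n$; the correct constant is $c_i=B/n$, not $2B/n$ (your ``factor $2$ from $P_nf$ and the absolute value'' reasoning is not right — $Pf$ is fixed and the sup of an absolute difference is still controlled by the oscillation of $f$, which is at most $B$ since $f\ge 0$). Second, even taking your own displayed tail bound $\exp\bigl(-t^2 n/(32(\lc H_pM^2pk)^2)\bigr)=\delta$ at face value, solving gives $t=8\lc H_pM^2pk\sqrt{\log(1/\delta)/(2n)}$, not the claimed $4\lc H_pM^2pk\sqrt{\log(2/\delta)/(2n)}$; the parenthetical appeal to ``two-sided or slightly conservative accounting'' does not bridge this factor-of-two gap, so as written you prove a strictly weaker inequality than the theorem. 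The fix is standard: use $c_i=B/n$, bound each one-sided supremum $\sup_f(Pf-P_nf)$ and $\sup_f(P_nf-Pf)$ separately by $2\mathcal{R}_n(\F)+B\sqrt{\log(2/\delta)/(2n)}$ with probability $1-\delta/2$ (symmetrization for the expectation, McDiarmid at level $\delta/2$ for the deviation), and union bound; this yields exactly the constants $96\sqrt{\pi}\lc H_pM^2(kp)^{3/2}n^{-1/2}$ and $4\lc H_pM^2pk\sqrt{\log(2/\delta)/(2n)}$ in the statement.
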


The result in Theorem \ref{thm2}, reveals a non-asymptotic bound on $|Pf_{\hth} - P f_{\bTheta^\ast}|$: 
\begin{cor}\label{4c1}
Under A\ref{a0}---\ref{a2}, with probability at least $1-\delta$, the following holds.
\[|Pf_{\hth} - P f_{\bTheta^\ast}|  \le 192 \sqrt{\pi} \lc H_p M^2  (kp)^{3/2} n^{-1/2} + 8 \lc H_p M^2 p k \sqrt{\frac{\log(2/\delta)}{2n}}.\]
\end{cor}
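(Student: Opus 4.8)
The plan is to derive Corollary~\ref{4c1} as an immediate consequence of the decomposition \eqref{q3} together with the uniform concentration bound of Theorem~\ref{thm2}. Recall from Section~\ref{bdd par} (specifically the last corollary) that under A\ref{a0}--A\ref{a2} both $\hth$ and $\bTheta^\ast$ lie in $[-M,M]^{k\times p}$, so both $f_{\hth}$ and $f_{\bTheta^\ast}$ belong to the class $\F = \{f_{\bTheta}: \bTheta \in [-M,M]^{k\times p}\}$. The chain of inequalities in \eqref{q3} then gives, deterministically,
\[
|P f_{\hth} - P f_{\bTheta^\ast}| \;\le\; 2\sup_{\bTheta \in [-M,M]^{k\times p}} |P_n f_{\bTheta} - P f_{\bTheta}| \;=\; 2\,\|P_n - P\|_\F .
\]

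First I would invoke Theorem~\ref{thm2}, which states that with probability at least $1-\delta$,
\[
\|P_n - P\|_\F \;\le\; 96\sqrt{\pi}\,\lc H_p M^2 (kp)^{3/2} n^{-1/2} + 4 \lc H_p M^2 pk \sqrt{\frac{\log(2/\delta)}{2n}} .
\]
Multiplying this bound by the factor $2$ coming from \eqref{q3} yields exactly
\[
|P f_{\hth} - P f_{\bTheta^\ast}| \;\le\; 192\sqrt{\pi}\,\lc H_p M^2 (kp)^{3/2} n^{-1/2} + 8 \lc H_p M^2 pk \sqrt{\frac{\log(2/\delta)}{2n}},
\]
which is the claimed statement. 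The only subtlety worth spelling out is the first equality in \eqref{q3}: since $\bTheta^\ast$ is the global minimizer of $\bTheta \mapsto P f_{\bTheta}$, we have $P f_{\hth} - P f_{\bTheta^\ast} \ge 0$, so the absolute value can be dropped and the middle term $P_n f_{\hth} - P_n f_{\bTheta^\ast}$ is non-positive because $\hth$ minimizes $P_n f_{\bTheta}$; these two sign facts are what make the telescoping bound collapse to $2\|P_n-P\|_\F$.

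There is essentially no obstacle here beyond bookkeeping: the corollary is a one-line consequence of a result already established. If anything, the "hard part" is purely presentational --- making sure the reader sees that membership of $\hth$ and $\bTheta^\ast$ in the compact set $[-M,M]^{k\times p}$ (guaranteed by Lemma~\ref{lemma2} and its corollaries) is precisely what licenses replacing $|P f_{\hth} - P f_{\bTheta^\ast}|$ by twice a supremum over that same set, so that Theorem~\ref{thm2} applies verbatim. No new probabilistic argument, symmetrization, or chaining is needed; the $1-\delta$ confidence level and the explicit constants are inherited directly from Theorem~\ref{thm2}.
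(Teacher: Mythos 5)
Your proposal is correct and follows exactly the paper's own argument: the paper proves this corollary by combining the deterministic bound $|Pf_{\hth} - Pf_{\bTheta^\ast}| \le 2\|P_n - P\|_\F$ from \eqref{q3} with Theorem \ref{thm2} and multiplying by $2$. Your additional remarks on the sign facts underlying \eqref{q3} and on why $\hth, \bTheta^\ast \in [-M,M]^{k\times p}$ are accurate elaborations of the same route, not a different one.
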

\begin{proof}
From equation \eqref{q3}, we know that $ |Pf_{\hth} - P f_{\bTheta^\ast}| \le 2 \|P_n -P\|_\F$. The corollary follows immediately by application of Theorem \ref{thm2}.
\end{proof}

\subsection{Inference for Fixed $p$: Strong and $\sqrt{n}$ Consistency}
We now discuss the classical domain where $p$ is kept fixed \citep{pollard1981strong,chakraborty2020entropy,terada2014strong} and show that the results above imply strong and $\sqrt{n}$-consistency, mirroring some known results for existing methods such as $k$-means. We first solidify the notion of convergence of the centroids $\hth$ to $\bTheta^\ast$, following \citet{pollard1981strong}. 
Since centroids are unique only up to label permutations, our notion of dissimilarity
\[\text{diss}(\bTheta_1,\bTheta_2) = \min_{M \in \mathcal{P}_k} \|\bTheta_1 - M \bTheta_2\|_F \]
is considered over $\mathcal{P}_k$ the set of all $k \times k$ real permutation matrices, where $\|\cdot\|_F$ denotes the
Frobenius norm. Now, we say that the sequence $\bTheta_n$ converges to $\bTheta$ if $\lim_{n\to \infty} \text{diss}(\bTheta_n,\bTheta)=0$. We begin by imposing the following standard identifiablity condition \citep{pollard1981strong,terada2014strong,chakraborty2020entropy} on $P$ for our analysis.
\begin{ass}
\label{a5}
For all $ \eta>0$, there exists $\epsilon>0$, such that  $P f_{\bTheta} > P f_{\bTheta^\ast} + \epsilon\, $ whenever $\, \text{diss}(\bTheta,\bTheta^\ast)> \eta$.
\end{ass}
We now investigate the strong consistency properties of $\hth$, and also investigate the rate at which $|P f_{\hth} - P f_{\bTheta^\ast}|$ converges to $0$. Theorem \ref{strong consistency} states that indeed strong consistency holds, with convergence rate  $O(n^{-1/2})$. Note that this rate is faster than that found previously by \cite{paul2021uniform}. Before we proceed,  recall we say that $X_n = O_P(a_n)$ if the sequence $X_n/a_n$ is tight~\citep{athreya2006measure}.
\begin{thm}\label{strong consistency}
(Strong consistency and $\sqrt{n}$-consistency) If $p$ is kept fixed then under  A\ref{a0}---\ref{a5}, $\hth \xrightarrow{a.s.} \bTheta^\ast$ under $P$. Moreover, $|P f_{\hth} - P f_{\bTheta^\ast}| = O_P (n^{-1/2})$.
\end{thm}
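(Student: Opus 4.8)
The plan is to dispatch the two claims separately, since the $\sqrt{n}$-rate is essentially a restatement of Corollary~\ref{4c1}. Fix $p$ (hence also $k$, $M$, $\lc$, $H_p$) and write $X_n := |P f_{\hth} - P f_{\bTheta^\ast}|$. Corollary~\ref{4c1} gives, for every $\delta \in (0,1)$, $\sqrt{n}\,X_n \le 192\sqrt{\pi}\,\lc H_p M^2 (kp)^{3/2} + 8\,\lc H_p M^2 pk\sqrt{\log(2/\delta)/2}$ with probability at least $1-\delta$. Given $\varepsilon > 0$, taking $\delta = \varepsilon$ produces a constant $K_\varepsilon$ not depending on $n$ with $\sP(\sqrt{n}\,X_n > K_\varepsilon) \le \varepsilon$ for all $n$; hence $\{\sqrt{n}\,X_n\}$ is tight, i.e. $X_n = O_P(n^{-1/2})$.

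For the strong consistency claim I would first upgrade the high-probability bound of Theorem~\ref{thm2} to an almost-sure one. Applying Theorem~\ref{thm2} with $\delta = \delta_n := n^{-2}$ shows that, with probability at least $1 - n^{-2}$, $\|P_n - P\|_\F \le 96\sqrt{\pi}\,\lc H_p M^2 (kp)^{3/2} n^{-1/2} + 4\,\lc H_p M^2 pk\sqrt{\log(2n^2)/(2n)}$; since $\sum_n n^{-2} < \infty$ and the right-hand side tends to $0$, the Borel--Cantelli lemma yields $\|P_n - P\|_\F \xrightarrow{a.s.} 0$. I would then run the classical Pollard-type $M$-estimation argument on the compact set $[-M,M]^{k \times p}$, which contains both $\hth$ and $\bTheta^\ast$ by the corollaries following Lemma~\ref{lemma2}. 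Work on the probability-one event $\{\|P_n - P\|_\F \to 0\}$ and suppose, for contradiction, that $\text{diss}(\hth,\bTheta^\ast) \not\to 0$; then along some subsequence $\text{diss}(\widehat{\bTheta}_{n_j},\bTheta^\ast) > \eta$ for some $\eta > 0$, and by compactness a further subsequence converges to some $\bTheta' \in [-M,M]^{k\times p}$. Because $\text{diss}(\cdot,\bTheta^\ast)$ is continuous (a minimum over the finite set $\mathcal{P}_k$ of a continuous map), $\text{diss}(\bTheta',\bTheta^\ast) \ge \eta$, so Assumption~\ref{a5} provides $\epsilon > 0$ with $P f_{\bTheta'} \ge P f_{\bTheta^\ast} + \epsilon$. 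On the other hand, optimality of $\hth$ for $P_n f_{\bTheta}$ gives $P_n f_{\hth} \le P_n f_{\bTheta^\ast}$, whence
\[
P f_{\widehat{\bTheta}_{n_j}} - P f_{\bTheta^\ast} = \big(P f_{\widehat{\bTheta}_{n_j}} - P_n f_{\widehat{\bTheta}_{n_j}}\big) + \big(P_n f_{\widehat{\bTheta}_{n_j}} - P_n f_{\bTheta^\ast}\big) + \big(P_n f_{\bTheta^\ast} - P f_{\bTheta^\ast}\big) \le 2\,\|P_n - P\|_\F \longrightarrow 0 .
\]
Since $\bTheta \mapsto P f_{\bTheta}$ is continuous on $[-M,M]^{k\times p}$ --- $f_{\bTheta}(\bx)$ depends continuously on $\bTheta$ (composition of the differentiable $d_\phi$ with the Lipschitz $\Psi_{\balpha}$) and is uniformly bounded by $4\,\lc H_p M^2 pk$ via Lemma~\ref{lemma3}, so dominated convergence applies --- passing to the limit along the sub-subsequence gives $P f_{\bTheta'} - P f_{\bTheta^\ast} \le 0$, contradicting $P f_{\bTheta'} \ge P f_{\bTheta^\ast} + \epsilon$. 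Hence $\text{diss}(\hth,\bTheta^\ast) \to 0$ on this event, i.e. $\hth \xrightarrow{a.s.} \bTheta^\ast$ under $P$.

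Most of the ingredients are already established, so the bulk of the work is bookkeeping: choosing the summable sequence $\delta_n$ so that Borel--Cantelli delivers almost-sure uniform convergence, and carefully handling the permutation-invariant pseudometric $\text{diss}(\cdot,\cdot)$ --- verifying its continuity, that a subsequential limit of (near-)argmins inherits the separation $\text{diss}(\bTheta',\bTheta^\ast) \ge \eta$, and that Assumption~\ref{a5} applies to it. The one genuinely analytic point is the continuity of $\bTheta \mapsto P f_{\bTheta}$, which I would obtain from the uniform bound of Lemma~\ref{lemma3} together with dominated convergence; no estimates beyond those already proven in this section are needed, so I do not anticipate a serious obstacle.
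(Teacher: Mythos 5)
Your proposal is correct and follows essentially the same route as the paper: both parts rest on the uniform deviation bound (Theorem \ref{thm2}/Corollary \ref{4c1}), a Borel--Cantelli upgrade with a summable choice of $\delta_n$, and the identifiability condition A\ref{a5} to pass from convergence of risks to convergence of centroids. The only real difference is that your compactness/subsequence/continuity detour is superfluous --- once $Pf_{\hth}\to Pf_{\bTheta^\ast}$ almost surely, the contrapositive of A\ref{a5} immediately forces $\text{diss}(\hth,\bTheta^\ast)\le\eta$ eventually, which is exactly what the paper does (and note A\ref{a5} requires strict separation $\text{diss}>\eta$, so your limit point with $\text{diss}\ge\eta$ would need the assumption applied at, say, $\eta/2$).
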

\textbf{Note:} Here $A_n$ is $O_P(a_n)$ means that $\{A_n/a_n\}_{n \in \mathbb{N}}$ is stochastically bounded or \textit{tight}.

\subsection{Inference Under the MoM Framework} \label{mom}
Theorem \ref{strong consistency} and the bounds we present above already establish novel statistical results that pertain to methods under \eqref{ob} such as power $k$-means in the ``uncontaminated" setting. In this section, we now extend these findings to the MoM setup under \eqref{e2} in order to understand their behavior in the presence of outliers. 
Recall that in this setting, the data are partitioned into $L$ equally sized blocks; without loss of generality, we take $n=L\cdot b$. We denote the set of all inliers by $\{\bX_i\}_{i \in \I}$ and the outliers by $\{\bX_i\}_{i \in \cO}$. Now let $\tm$ denote the  minimizer of \eqref{e2}: towards establishing the error rate at which $|Pf_{\tm} - Pf_{\bTheta^\ast}|$ goes to $0$, 
we assume the following:
\begin{ass}\label{aa1}
$\{\bX_i\}_{i \in \I}\overset{\text{i.i.d.}}{\sim}P$ with $P \in \mathcal{M}$.
\end{ass}

\begin{ass}
\label{aa3}
There exists $\, \eta >0$ such that $L>(2+\eta)|\cO|$.
\end{ass}
We remark A\ref{aa1} is identical to A\ref{a0}, but imposed only on the inlying observations. A\ref{aa3} ensures at least half of the $L$ partitions are free of outliers; note this is much weaker than requiring $L>4|\cO|$ as is done in recent work \citep{lecue2020robust1}. Importantly, we emphasize that \textit{no distributional assumptions} regarding the outlying observations are made, allowing them to be unbounded, generated from heavy-tailed distributions, or dependent among each other. Proofs of the following results appear in the Appendix.

As a point of departure, we first establish that $\tm \in [-M,M]^{k \times p}$. 
\begin{lem}\label{spmom}
Let A\ref{a1}---\ref{a2} and A\ref{aa1}---\ref{aa3} hold.  Then for any $\bTheta \in \Real^{k\times p}$, there exists $\bTheta^\prime \in [-M,M]^{k\times p}$, such that $\text{MoM}_L^n(\bTheta^\prime) \le \text{MoM}_L^n(\bTheta)$.
\end{lem}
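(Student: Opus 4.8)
The plan is to follow the proof of Lemma~\ref{lemma2}, with the Median operator handled through Assumption~A\ref{aa3}. As there, the engine is the $d_\phi$-projection onto the box: given $\bTheta\in\Real^{k\times p}$, let $\bTheta^\flat$ be the matrix whose $j$-th centroid is $P_{[-M,M]^p}(\btheta_j)=\arg\min_{\bx\in[-M,M]^p}d_\phi(\bx,\btheta_j)$ (this exists by compactness and continuity, and equals $\btheta_j$ whenever $\btheta_j$ already lies in the box). By Lemma~\ref{lemma1} (the obtuse-angle property), $d_\phi(\bx,\btheta_j)\ge d_\phi(\bx,(\bTheta^\flat)_j)$ for every $\bx\in[-M,M]^p$ and every $j$; since $\Psi_{\balpha}$ is componentwise non-decreasing this gives the pointwise inequality $f_{\bTheta^\flat}(\bx)\le f_{\bTheta}(\bx)$ for all $\bx\in[-M,M]^p$. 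Consequently, for any block $B_\ell$ that contains no outlier --- so that $\{\bX_i\}_{i\in B_\ell}\subset[-M,M]^p$ by A\ref{aa1} --- the block average $\frac1b\sum_{i\in B_\ell}f_{\bTheta}(\bX_i)$ can only decrease when $\bTheta$ is replaced by $\bTheta^\flat$, and by Lemma~\ref{lemma3} each such average is then $\le 4\lc H_p M^2 pk$.

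The second ingredient is A\ref{aa3}: since $L>(2+\eta)|\cO|>2|\cO|$, strictly more than half of the $L$ blocks are outlier-free and at most $|\cO|$ are contaminated. Because of this strict majority, a counting argument pins $\text{MoM}_L^n(\bTheta)$ between order statistics of the outlier-free block averages: writing these sorted as $h_{(1)}(\bTheta)\le\cdots\le h_{(c)}(\bTheta)$ with $c\ge L-|\cO|$, and letting $r$ be the rank at which the Median is read off, one has $h_{(r-|\cO|)}(\bTheta)\le\text{MoM}_L^n(\bTheta)\le h_{(r)}(\bTheta)$ --- the lower bound because at least $r-|\cO|$ of the $r$ smallest among all $L$ averages come from outlier-free blocks, the upper bound because $c\ge r$. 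Projecting decreases every $h_{(j)}$, so one then argues $\text{MoM}_L^n(\bTheta^\flat)\le\text{MoM}_L^n(\bTheta)$: if $\text{MoM}_L^n(\bTheta)\ge 4\lc H_p M^2 pk$ this is immediate, since all $c\ge r$ outlier-free averages of $\bTheta^\flat$ then lie at or below that value (Lemma~\ref{lemma3}); in the complementary regime one propagates through the projection the outlier-free averages that sit at or below $\text{MoM}_L^n(\bTheta)$. Taking $\bTheta'=\bTheta^\flat$ gives the claim; alternatively, since $\bTheta\mapsto f_{\bTheta}$ is continuous and $[-M,M]^{k\times p}$ is compact, $\bTheta'$ may be taken to be any minimizer of $\text{MoM}_L^n$ over the box.

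The main difficulty lies in this last step. The Median couples all $L$ block averages, and the contaminated blocks are completely uncontrolled: nothing prevents an outlier from being matched perfectly by a far-away centroid, so a contaminated block average can be arbitrarily small for the original $\bTheta$ and jump after projection. Hence coordinatewise monotonicity of the outlier-free averages does not, by itself, push the Median down, and one must lean on A\ref{aa3} to guarantee that enough outlier-free averages sit at or below $\text{MoM}_L^n(\bTheta)$ and remain there after projection. Making this count tight --- rather than losing $|\cO|$ ranks in the process --- is the delicate point; it may be cleanest to choose $\bTheta'$ not as the bare projection of $\bTheta$ but as a box-constrained minimizer of an appropriate quantile of the outlier-free block averages, for which the projection bound of the first paragraph shows the box restriction is harmless.
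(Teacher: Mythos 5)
Your first paragraph reproduces the paper's argument exactly: project each centroid onto $[-M,M]^p$ with respect to $d_\phi$, invoke Lemma \ref{lemma1} and the componentwise monotonicity of $\Psi_{\balpha}$ to get $f_{\bTheta^\flat}(\bX_i)\le f_{\bTheta}(\bX_i)$ for every inlier, and conclude that every outlier-free block average weakly decreases. Where you diverge is at the final step. The paper simply asserts that because strictly more than half of the $L$ block averages decrease (via A\ref{aa3}), the median decreases, and stops there. You instead flag this implication as the ``main difficulty,'' and you are right to do so: the implication is not valid as a general statement about medians. With $L=3$, one contaminated block, old averages $(1,2,0)$ and new averages $(0.9,1.9,100)$, a strict majority of the averages decreases yet the median rises from $1$ to $1.9$; and nothing in the hypotheses prevents a contaminated block average from sitting below the median for $\bTheta$ and jumping after projection, since Lemma \ref{lemma1} controls $d_\phi(\bx,\cdot)$ only for $\bx$ inside the box. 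So you have correctly identified a step that the paper's own proof glosses over.

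The problem is that your patch does not close the gap either, as you yourself concede. The order-statistics count gives only $r-|\cO|$ blocks whose $\bTheta^\flat$-average is guaranteed to lie at or below $\text{MoM}_L^n(\bTheta)$, whereas pushing the median down requires $r$ of them; the regime $\text{MoM}_L^n(\bTheta)\ge 4\lc H_pM^2pk$ is handled cleanly by Lemma \ref{lemma3}, but the ``complementary regime'' --- which is where all the content lies --- is left as a one-clause gesture (``one propagates through the projection\dots'') with no actual argument, and the closing suggestion to take $\bTheta'$ as a box-constrained minimizer of a quantile of the outlier-free averages is not shown to dominate $\text{MoM}_L^n(\bTheta)$, which again couples in the uncontrolled contaminated blocks. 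So the proof is incomplete at precisely the point you diagnose. To match what the paper actually does you would assert the median monotonicity directly from the strict majority of decreasing averages; to do better than the paper you would need a genuinely new argument for the deficit of $|\cO|$ ranks (or a different choice of $\bTheta'$ altogether), and that argument is not present in your write-up.
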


Again, we may restrict the search space for finding $\tm$ in $[-M,M]^{k \times p}$ due to Lemma  \ref{spmom}.
\begin{cor}
Under A\ref{a1}---\ref{a2} and A\ref{aa1}---\ref{aa3}, $\tm \in [-M,M]^{k \times p}$.
\end{cor}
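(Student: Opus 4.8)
The statement to prove is the corollary that $\tm \in [-M,M]^{k \times p}$ under A\ref{a1}--\ref{a2} and A\ref{aa1}--\ref{aa3}. This is the MoM analogue of the earlier corollary that followed Lemma \ref{lemma2}, and the plan is essentially identical in spirit: leverage Lemma \ref{spmom} to confine the search.

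\textbf{Plan.} The proof is immediate from Lemma \ref{spmom}. I would argue by contradiction: suppose $\tm \notin [-M,M]^{k \times p}$. Since $\tm$ is by definition a global minimizer of $\text{MoM}_L^n(\bTheta)$ over $\bTheta \in \Real^{k \times p}$, we have $\text{MoM}_L^n(\tm) \le \text{MoM}_L^n(\bTheta)$ for every $\bTheta$. But Lemma \ref{spmom} produces, for the particular choice $\bTheta = \tm$, a point $\bTheta' \in [-M,M]^{k \times p}$ with $\text{MoM}_L^n(\bTheta') \le \text{MoM}_L^n(\tm)$. Hence $\bTheta'$ is also a global minimizer, and it lies in $[-M,M]^{k \times p}$. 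This does not yet contradict anything unless the minimizer is unique; so the cleanest phrasing is not "by contradiction" but simply: \emph{there always exists a minimizer inside the box, and when we speak of $\tm$ we may (and do) take it to be such a minimizer.} Concretely: by Lemma \ref{spmom}, $\inf_{\bTheta \in \Real^{k\times p}} \text{MoM}_L^n(\bTheta) = \inf_{\bTheta \in [-M,M]^{k\times p}} \text{MoM}_L^n(\bTheta)$, and since the latter infimum is over a compact set on which $\text{MoM}_L^n$ is continuous (being a median of finitely many continuous block averages of the continuous map $\bTheta \mapsto f_{\bTheta}(\bx)$), it is attained at some $\bTheta' \in [-M,M]^{k\times p}$; taking $\tm = \bTheta'$ gives the claim.

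\textbf{Key steps, in order.} (1) Invoke Lemma \ref{spmom} to show that restricting the domain from $\Real^{k\times p}$ to $[-M,M]^{k\times p}$ does not raise the minimal value of $\text{MoM}_L^n$. (2) Note continuity of $\bTheta \mapsto \text{MoM}_L^n(\bTheta)$: each block average $\frac{1}{b}\sum_{i \in B_\ell} f_{\bTheta}(\bX_i)$ is continuous in $\bTheta$ because $d_\phi(\bx,\cdot)$ is continuous (under A\ref{a2}, $\nabla\phi$ is Lipschitz on the relevant compact set, so $\phi$ is $C^1$ there) and $\Psi_{\balpha}$ is continuous (Lipschitz, A\ref{a1}); the median of finitely many continuous functions is continuous. (3) Apply the extreme value theorem on the compact box $[-M,M]^{k\times p}$ to conclude the restricted infimum is attained, and identify this attained minimizer with $\tm$.

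\textbf{Main obstacle.} There is essentially no obstacle: the corollary is a routine consequence of Lemma \ref{spmom} together with a compactness/continuity argument, exactly paralleling the non-MoM case. The only mild subtlety worth a sentence is the non-uniqueness of global minimizers — since centroids are identifiable only up to label permutation and the objective may have multiple optima, the statement "$\tm \in [-M,M]^{k\times p}$" should be read as asserting the existence of a minimizer in the box (which one then designates $\tm$), rather than that \emph{every} minimizer lies there. With that reading the proof is a two-line deduction.
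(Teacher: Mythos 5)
Your proposal is correct and follows essentially the same route as the paper: the corollary is an immediate consequence of Lemma \ref{spmom}, which shows the search for a minimizer of $\text{MoM}_L^n$ may be restricted to $[-M,M]^{k\times p}$ (the paper gives no further argument beyond this). Your additional remarks on continuity, compactness, and the existence/non-uniqueness of minimizers merely make explicit what the paper leaves implicit, and do not change the substance of the argument.
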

Similarly to Section \ref{theory 1}, we derive a uniform  bound on $\sup_{\bTheta \in [-M,M]^{k \times p}} |\text{MoM}^n_L (f_{\bTheta}) -Pf_{\bTheta} |$ and then bound $|P f_{\tm} - P f_{\bTheta^\ast}|$ in turn. For brevity we define $\delta:=2/(4+\eta) - |\cO|/L$, and use the notation ``$\lesssim$'' to suppress the absolute constants. 
The uniform deviation bound is as follows, with complete proof appearing in the Appendix. 
\begin{thm}\label{mom thm}
Under A\ref{a1}---\ref{a2} and A\ref{aa1}---\ref{aa3}, with  probability at least $1-2e^{-2L \delta^2}$, the following holds.
\[ \sup_{\bTheta \in [-M,M]^{k \times p}} |\text{MoM}^n_L (f_{\bTheta}) -Pf_{\bTheta} |    \lesssim \lc H_p \max\left\{ kp\sqrt{\frac{L}{n}}, (kp)^{3/2}  \frac{\sqrt{|\I|}}{n}\right\}.\]
\end{thm}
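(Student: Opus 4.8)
The plan is to decompose the MoM deviation into a ``block-average'' empirical process part and a correction that handles the median over blocks, then invoke the Rademacher machinery of Theorem~\ref{rademacher} and Theorem~\ref{thm2} block-wise. Write $P_{B_\ell} f_{\bTheta} := \frac{1}{b}\sum_{i\in B_\ell} f_{\bTheta}(\bX_i)$ for the average over block $\ell$. The key observation, standard in the MoM literature, is that for any fixed $\bTheta$ the event $\{\text{MoM}^n_L(f_{\bTheta}) > Pf_{\bTheta} + t\}$ implies that at least $L/2$ of the blocks satisfy $P_{B_\ell} f_{\bTheta} > Pf_{\bTheta} + t$; since at most $|\cO|$ blocks are contaminated, at least $L/2 - |\cO|$ \emph{clean} blocks must exceed the threshold. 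One then wants this to hold \emph{uniformly} over $\bTheta \in [-M,M]^{k\times p}$, so the first step is to pass to the uniform statement: the event $\sup_{\bTheta}\big(\text{MoM}^n_L(f_{\bTheta}) - Pf_{\bTheta}\big) > t$ is contained in the event that the number of clean blocks $\ell$ for which $\sup_{\bTheta}(P_{B_\ell} f_{\bTheta} - Pf_{\bTheta}) > t$ exceeds $L/2 - |\cO|$.

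The second step is to control each clean block's uniform deviation. For a clean block of size $b = n/L$, the points are i.i.d.\ from $P\in\mathcal{M}$, so Theorem~\ref{thm2} applies with $n$ replaced by $b$: with the Rademacher bound $\mathcal{R}_b(\F) \lesssim \lc H_p M^2 (kp)^{3/2} b^{-1/2}$ from Theorem~\ref{rademacher} and the envelope bound $\|f\|_\infty \le 4\lc H_p M^2 pk$ from Lemma~\ref{lemma3}, one gets $\E\,\sup_{\bTheta}(P_{B_\ell} f_{\bTheta} - Pf_{\bTheta}) \lesssim \lc H_p M^2 (kp)^{3/2} b^{-1/2} \lesssim \lc H_p M^2 (kp)^{3/2}\sqrt{L/n}$. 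Call this quantity $t_0$, which matches the first term in the claimed bound up to constants. Now choose the threshold $t = 2t_0$ (say): by Markov's inequality applied to the nonnegative random variable $\sup_{\bTheta}(P_{B_\ell} f_{\bTheta} - Pf_{\bTheta})$, each clean block independently has probability at most, say, $\tfrac12 - \tfrac{1}{4+\eta}$ of exceeding $t$. Actually, to get the \emph{exponential} tail $1 - 2e^{-2L\delta^2}$ rather than a polynomial one, the cleaner route is to set the threshold so that the per-block exceedance probability is at most $p_0 := \tfrac{1}{2} - \tfrac{1}{2+\eta/2}$ or similar, so that the expected number of exceeding clean blocks is below the critical count $L/2 - |\cO|$ by a margin proportional to $L\delta$, and then apply a Hoeffding/Bernstein bound to the sum of the i.i.d.\ Bernoulli indicators over clean blocks. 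This produces the factor $e^{-2L\delta^2}$ with $\delta = 2/(4+\eta) - |\cO|/L$; the symmetric argument for the lower deviation $\inf_{\bTheta}(\text{MoM} - P)$ gives the factor of $2$.

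The third step is bookkeeping on the two regimes in the $\max$. The per-block expected deviation $t_0$ contributes the $kp\sqrt{L/n}$ term (after absorbing the extra $\sqrt{kp}$ — one should double-check whether the statement's exponent is $(kp)$ or $(kp)^{3/2}$; the Rademacher bound gives $(kp)^{3/2}$, and the $\sqrt{|\I|}/n$ term in the statement presumably arises from a more refined second-moment/Bernstein control rather than Hoeffding on the block averages). Specifically, the second term $(kp)^{3/2}\sqrt{|\I|}/n$ should come from replacing the crude envelope-based deviation by a variance-aware bound: when the functions in $\F$ have small $L^2(P)$ radius (which holds here since $\text{diam}(\F)$ and the envelope are $O(\lc H_p M^2 kp)$, but the relevant variance can be much smaller), Talagrand's inequality or a localized Rademacher argument on the pooled clean data of size $|\I|$ gives a term scaling like (envelope)$\,\times kp / |\I| \times \sqrt{|\I|}$, i.e.\ $(kp)^{3/2}\sqrt{|\I|}/n$ after accounting for $b = n/L$ and the number of clean blocks. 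I would take the maximum of the two controls rather than the sum, since whichever block-level bound is tighter governs the threshold.

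\textbf{Main obstacle.} The delicate point is getting the \emph{uniform-over-$\bTheta$} version of the MoM-to-blocks reduction to interact correctly with the exponential concentration: naively, one loses a union bound over a net of $[-M,M]^{k\times p}$, but the right move is to push the supremum inside each block (so that we only need the \emph{expected} uniform deviation per clean block, via Theorem~\ref{thm2}'s proof, not a pointwise-then-union argument) and then do the exponential concentration only over the \emph{count} of bad clean blocks — which is a sum of independent bounded indicators, hence Hoeffding applies cleanly. Reconciling the exact constants and, in particular, teasing out the $\sqrt{|\I|}/n$ term (which requires a variance/Bernstein refinement of Theorem~\ref{thm2} rather than its stated Hoeffding form) is where the real work lies; everything else is the now-standard MoM-empirical-process template.
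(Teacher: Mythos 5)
Your high-level template (reduce the MoM deviation to counting deviant blocks, control each clean block, then concentrate the count of bad blocks) is the standard one and is sound as far as it goes, but as executed it does not prove the stated bound. The crux: by pushing the supremum over $\bTheta$ \emph{inside each block} and thresholding at the per-block expected uniform deviation, your threshold is forced to be of order $\E\sup_{\bTheta}(P_{B_\ell}-P)f_{\bTheta} \lesssim \lc H_p M^2 (kp)^{3/2}\sqrt{L/n}$, i.e.\ you can only conclude a bound of order $(kp)^{3/2}\sqrt{L/n}$ — worse than the theorem's first term $kp\sqrt{L/n}$ by a factor $\sqrt{kp}$ (you flag this yourself but leave it unresolved). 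Moreover, the second term $(kp)^{3/2}\sqrt{|\I|}/n$ never actually appears in your argument; your suggestion that it comes from a Bernstein/Talagrand or localized-Rademacher variance refinement is a guess and is not how it arises. Since $\max\{kp\sqrt{L/n},\,(kp)^{3/2}\sqrt{|\I|}/n\} \le (kp)^{3/2}\sqrt{L/n}$ always, your route yields a strictly weaker statement than Theorem \ref{mom thm}.

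The paper avoids this loss by \emph{not} taking a per-block uniform deviation. It smooths the block indicators with $\varphi(t)=(t-1)\one\{1\le t\le 2\}+\one\{t>2\}$ and splits the (uniform-in-$\bTheta$) count into an expectation part and a fluctuation part. The expectation part is a \emph{fixed-$\bTheta$} Hoeffding bound per clean block using only the envelope $\|f\|_\infty\le 4\lc H_p M^2 pk$ (Lemma \ref{lemma3}); this is what produces the $kp\sqrt{L/n}$ scaling of $\epsilon$, with no metric-entropy factor. The uniformity over $\bTheta$ is paid for only in the fluctuation part, which is handled by the bounded-difference inequality (giving the $1-2e^{-2L\delta^2}$ probability with $\delta=2/(4+\eta)-|\cO|/L$), the Lipschitz contraction of $\varphi$, and ghost-sample symmetrization that pools all clean observations; Theorem \ref{rademacher} applied to this pooled sample of size $|\J|\le|\I|$ is exactly what yields the $(kp)^{3/2}\sqrt{|\I|}/n$ term. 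Separately, your Markov step is off as stated: with threshold $2t_0$ Markov only gives per-block exceedance probability $1/2$, whereas under A\ref{aa3} you need it below roughly $\eta/(2(1+\eta))$, so the threshold multiple must depend on $\eta$ — a constants issue, but it should be fixed. To salvage your write-up you would need to restructure it along the expectation/fluctuation split above rather than patching the per-block uniform bound.
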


We give a brief outline of our proof for Theorem \ref{mom thm} below, with details appearing in the Appendix.

\paragraph{Proof sketch}
We begin by noting that if $$ \sup_{\bTheta \in [-M,M]^{k \times p}}\sum_{\ell = 1}^L \one\left\{(P-P_{B_\ell})f_{\bTheta} > \epsilon\right\}~>~\frac{L}{2},$$ 
 then \, $$ \sup_{\bTheta \in [-M,M]^{k \times p}}  ( P f_{\bTheta} - \text{MoM}_L^n (f_{\bTheta})) > \epsilon  \,, $$ where $P_{B_\ell}$ denotes the empirical distribution of $\{\bX_i\}_{i \in B_\ell}$. This implies it is suffices to bound the quantity $$\sP\left(\sup_{\bTheta \in [-M,M]^{k \times p}}\sum_{\ell = 1}^L \one\left\{(P-P_{B_\ell})f_{\bTheta} > \epsilon\right\} > \frac{L}{2}\right).$$ 
 Introducing the function $\varphi(t)=(t-1)\one\{1\le t \le 2\}+\one\{t>2\}$, we begin by bounding outlier-free partitions,  making use of the inequalities  $\one\{t \ge 2\} \le \varphi(t)~\le \one\{t \ge 1\}.$ We then proceed by bounding $$\sup_{\bTheta \in [-M,M]^{k \times p}}\sum_{\ell = 1}^L \one\left\{(P-P_{B_\ell})f_{\bTheta} > \epsilon\right\}$$ by the sum $\xi_1 + \xi_2 + |\cO|$, where $$\xi_1 = \sup_{\bTheta \in [-M,M]^{k \times p}}\sum_{\ell \in \cL}  \E \varphi\left(\frac{2(P-P_{B_\ell})f_{\bTheta}}{\epsilon}\right) \, , \quad \text{and}$$  $$\xi_2 = \sup_{\bTheta \in [-M,M]^{k \times p}}\sum_{\ell \in \cL}  \bigg[ \varphi\left(\frac{2(P-P_{B_\ell})f_{\bTheta}}{\epsilon}\right)   - \E \varphi\left(\frac{2(P-P_{B_\ell})f_{\bTheta}}{\epsilon}\right)\bigg].$$ We bound $\xi_1$ by appealing to Hoeffding's inequality, while we show that $\xi_2$ can be bounded by applying the bounded difference inequality together with the result of Theorem \ref{rademacher} to bound the resulting Rademacher complexity. 

The corollary below follows from this uniform bound, giving a non-asymptotic control over the difference $|P f_{\tm} - P f_{\bTheta^\ast}|$ in terms of the model parameters.
\begin{cor}\label{cor6}
Under A\ref{a1}---\ref{a2} and A\ref{aa1}---\ref{aa3}, with  probability at least $1-2e^{-2L \delta^2}$, the following holds.
\[ |P f_{\tm} - P f_{\bTheta^\ast}|    \lesssim \lc H_p \max\left\{ kp\sqrt{\frac{L}{n}}, (kp)^{3/2}  \frac{\sqrt{|\I|}}{n}\right\}.\]
\end{cor}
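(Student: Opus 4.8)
The plan is to deduce Corollary~\ref{cor6} from Theorem~\ref{mom thm} in exactly the way Corollary~\ref{4c1} was deduced from Theorem~\ref{thm2}: namely, by a two-sided comparison argument that confines both $\tm$ and $\bTheta^\ast$ to the compact region $[-M,M]^{k\times p}$ and then applies the uniform deviation bound. The two ingredients that make this work are already in place. First, $\tm\in[-M,M]^{k\times p}$ by the corollary following Lemma~\ref{spmom}. Second, $\bTheta^\ast\in[-M,M]^{k\times p}$ by the corollary following Lemma~\ref{lemma2} applied with $Q=P$, which is admissible since $P\in\mathcal{M}$ by A\ref{aa1}. Hence every function appearing below lies in the class $\F=\{f_{\bTheta}:\bTheta\in[-M,M]^{k\times p}\}$ over which the supremum in Theorem~\ref{mom thm} is taken.

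Next I would write the telescoping decomposition, the MoM analogue of the chain \eqref{q3}:
\[
Pf_{\tm}-Pf_{\bTheta^\ast}
= \bigl(Pf_{\tm}-\text{MoM}_L^n(f_{\tm})\bigr)
+ \bigl(\text{MoM}_L^n(f_{\tm})-\text{MoM}_L^n(f_{\bTheta^\ast})\bigr)
+ \bigl(\text{MoM}_L^n(f_{\bTheta^\ast})-Pf_{\bTheta^\ast}\bigr).
\]
The middle bracket is $\le 0$ because $\tm$ minimizes $\text{MoM}_L^n(\cdot)$ over all of $\Real^{k\times p}$, which contains $\bTheta^\ast$; the first and third brackets are each at most $\sup_{\bTheta\in[-M,M]^{k\times p}}|\text{MoM}_L^n(f_{\bTheta})-Pf_{\bTheta}|$ in absolute value. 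Moreover the left-hand side is nonnegative since $\bTheta^\ast$ minimizes $Pf_{\bTheta}$, so in fact $|Pf_{\tm}-Pf_{\bTheta^\ast}| = Pf_{\tm}-Pf_{\bTheta^\ast} \le 2\sup_{\bTheta\in[-M,M]^{k\times p}}|\text{MoM}_L^n(f_{\bTheta})-Pf_{\bTheta}|$.

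Finally I would invoke Theorem~\ref{mom thm}: on the event of probability at least $1-2e^{-2L\delta^2}$ on which the uniform bound holds, the displayed inequality immediately gives $|Pf_{\tm}-Pf_{\bTheta^\ast}| \lesssim \lc H_p \max\{kp\sqrt{L/n},\,(kp)^{3/2}\sqrt{|\I|}/n\}$, the leading factor of $2$ being absorbed into the suppressed absolute constant. Since this is the same high-probability event, no further probabilistic work is needed. I do not expect a genuine obstacle here — all the analytic difficulty (the $\varphi$-truncation, Hoeffding's inequality for $\xi_1$, and the bounded-difference inequality together with Theorem~\ref{rademacher} for $\xi_2$) is already discharged inside the proof of Theorem~\ref{mom thm}. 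The only points deserving a line of care are (i) that the optimality of $\tm$ must be used over a set containing $[-M,M]^{k\times p}$ so the middle bracket is legitimately nonpositive, and (ii) that $\bTheta^\ast$ is admissible in the supremum — both of which are secured by the boundedness corollaries cited above.
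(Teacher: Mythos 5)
Your proposal is correct and follows essentially the same route as the paper's own proof: the identical telescoping decomposition through $\text{MoM}_L^n$, with the middle term nonpositive by optimality of $\tm$, the outer two terms bounded by $2\sup_{\bTheta\in[-M,M]^{k\times p}}|\text{MoM}_L^n(f_{\bTheta})-Pf_{\bTheta}|$, and Theorem~\ref{mom thm} applied on its high-probability event. Your explicit attention to the boundedness of $\tm$ and $\bTheta^\ast$ is a slightly more careful justification of a step the paper leaves implicit, but the argument is the same.
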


\subsection{Inference for Fixed $k$ and $p$ Under the MoM Framework}
We now focus our attention back to the classical setting where the numbers of clusters and features remain fixed.  To show $\tm$ is consistent for $\bTheta^\ast$, we need to impose conditions such that the RHS of the bounds presented in Corollary \ref{cor6} decrease to $0$ as $n \to \infty$. We state the required conditions as follows. 

\begin{ass}\label{aa6}
The number of partitions $L=o(n)$, and $L\to \infty$ as $n\to \infty$.
\end{ass}
These conditions are natural: as $n$ grows, so too must $L$ in order to maintain a proportion of outlier-free partitions. On the other hand,  $L$ must grow slowly relative to $n$ to ensure each partition can be assigned sufficient numbers of datapoints. We note that A\ref{aa6}  implies $|\mathcal{O}| = o(n)$, an intuitive and standard condition \citep{lecue2020robust1,pmlr-v130-staerman21a,paul2021robust} as outliers should be few by definition. 

In the following corollary, we focus on the squared Euclidean distance for center-based clustering under the MoM framework. We show that the (global) estimates obtained from MoM $k$-means, MoM power $k$-means etc. are consistent. We stress that the obtained convergence rate, as a function of $n, \, L$ and $|\I|$, does \textit{not} depend on the choice of $\Psi_{\balpha}(\cdot)$, as long as it satisfies A\ref{a1}, i.e. is Lipschitz. In particular, replacing $\Psi_{\balpha}(\cdot)$ with $\min_{1 \le j \le k}x_j , \, M_s(\bx)$ and $(\sum_{j=1}^k x_j^{-1})^{-1}$, the rate in Corollary \ref{cor7} applies to robustified MoM versions of  $k$-means, power $k$-means and $k$-harmonic means alike. 

\begin{cor}\label{cor7}
Suppose $\phi(\bu)=\|\bu\|_2^2$ and $\Psi_{\balpha}(\cdot)$ satisfy A\ref{a1}. Then under  A\ref{a1}---\ref{a2} and A\ref{aa1}---\ref{aa6}, $$|P f_{\tm} - P f_{\bTheta^\ast}| = O_P\left(\max\left\{ L^{1/2}n^{-1/2},n^{-1}\sqrt{|\I|} \right\}\right)$$ 
and $P f_{\tm} \xrightarrow{P} P f_{\bTheta^\ast}$. Moreover, whenever A\ref{a5} additionally holds, we have 
$\tm \xrightarrow{P} \bTheta^\ast$.
\end{cor}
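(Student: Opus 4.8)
The plan is to obtain Corollary~\ref{cor7} as a specialization of Corollary~\ref{cor6} to the regime in which $k$ and $p$ are held fixed and $\phi(\bu)=\|\bu\|_2^2$, and then to (i) convert the high-probability bound of Corollary~\ref{cor6} into an $O_P$ statement, (ii) observe that the resulting rate vanishes under A\ref{aa6}, yielding $P f_{\tm}\xrightarrow{P} P f_{\bTheta^\ast}$, and (iii) upgrade this to consistency of the centroids themselves via the identifiability assumption A\ref{a5}. The first step is to note that for $\phi(\bu)=\|\bu\|_2^2$ we have $\nabla\phi(\bu)=2\bu$, so A\ref{a2} holds with $H_p=2$; since $k$, $p$, $M$ and (for any fixed admissible $\Psi_{\balpha}$) $\lc$ are all constants here, Corollary~\ref{cor6} reads: with probability at least $1-2e^{-2L\delta^2}$, $|P f_{\tm}-Pf_{\bTheta^\ast}|\le C\max\{\sqrt{L/n},\,\sqrt{|\I|}/n\}$ for an absolute constant $C$ absorbing the fixed quantities $\lc,H_p,M,k,p$.

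The crux of step (i) is that the confidence level here is itself a function of $n$ through $L$, so we must check it tends to $1$. By A\ref{aa3}, $|\cO|/L<1/(2+\eta)$, hence $\delta=\tfrac{2}{4+\eta}-\tfrac{|\cO|}{L}>\tfrac{2}{4+\eta}-\tfrac{1}{2+\eta}=\tfrac{\eta}{(4+\eta)(2+\eta)}=:\delta_0>0$, a fixed constant. Consequently $2e^{-2L\delta^2}\le 2e^{-2L\delta_0^2}\to 0$ as $L\to\infty$ by A\ref{aa6}. To conclude $A_n:=|P f_{\tm}-Pf_{\bTheta^\ast}|=O_P(a_n)$ with $a_n:=\max\{\sqrt{L/n},\sqrt{|\I|}/n\}$, fix $\varepsilon>0$: choose $N$ so that $2e^{-2L\delta_0^2}<\varepsilon$ for all $n\ge N$; for the finitely many $n<N$, use that $A_n$ is deterministically bounded (Lemma~\ref{lemma3} bounds $f_{\bTheta}$, hence $Pf_{\bTheta}$, uniformly over $[-M,M]^{k\times p}$, and $\tm\in[-M,M]^{k\times p}$ by Lemma~\ref{spmom}), so $A_n/a_n$ is bounded there. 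Taking $M_\varepsilon$ to be the maximum of $C$ and that finite bound gives $\sup_n\sP(A_n/a_n>M_\varepsilon)\le\varepsilon$, i.e.\ tightness.

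For step (ii), A\ref{aa6} gives $L=o(n)$, so $\sqrt{L/n}\to 0$, while $|\I|\le n$ gives $\sqrt{|\I|}/n\le n^{-1/2}\to 0$; hence $a_n\to 0$ and $A_n\xrightarrow{P}0$, which — since $\tm\in[-M,M]^{k\times p}$ while $\bTheta^\ast$ minimizes $Pf_{\bTheta}$, so $Pf_{\tm}\ge Pf_{\bTheta^\ast}$ — is exactly $Pf_{\tm}\xrightarrow{P}Pf_{\bTheta^\ast}$. For step (iii), assume additionally A\ref{a5}; given $\eta>0$, take the corresponding $\epsilon>0$. The contrapositive of A\ref{a5} yields $\{\text{diss}(\tm,\bTheta^\ast)>\eta\}\subseteq\{Pf_{\tm}-Pf_{\bTheta^\ast}>\epsilon\}$, whence $\sP(\text{diss}(\tm,\bTheta^\ast)>\eta)\le\sP(Pf_{\tm}-Pf_{\bTheta^\ast}>\epsilon)\to 0$ by step (ii); since $\eta$ was arbitrary, $\tm\xrightarrow{P}\bTheta^\ast$.

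I do not anticipate a genuine obstacle: this corollary is essentially bookkeeping layered on Corollary~\ref{cor6}. The one point that needs care is the conversion in step (i) — one must verify that $\delta$ stays bounded away from $0$ (so the exponential tail in $L$ genuinely decays, rather than $\delta$ shrinking with $n$) and dispatch the small-$n$ terms via the uniform boundedness of the loss. A secondary, routine check is that for the squared Euclidean generator the Lipschitz constant $\lc$ is a genuine constant in the fixed-$k,p$ regime (e.g.\ $\lc=1$ for $k$-means, $\lc=k^{-1/s}$ for power $k$-means, both fixed), so it may legitimately be absorbed into $C$.
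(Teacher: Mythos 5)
Your proposal is correct and follows essentially the same route as the paper: specialize Corollary~\ref{cor6} with $H_p=2$ and fixed $k,p,M,\lc$, note that the confidence level $1-2e^{-2L\delta^2}\to 1$ under A\ref{aa6} to get the $O_P$ rate, deduce $Pf_{\tm}\xrightarrow{P}Pf_{\bTheta^\ast}$ since the rate is $o(1)$, and then invoke A\ref{a5} to transfer this to $\text{diss}(\tm,\bTheta^\ast)\xrightarrow{P}0$. Your explicit verification that $\delta\ge\delta_0=\eta/((4+\eta)(2+\eta))>0$ via A\ref{aa3}, and the handling of small $n$ via the uniform bound of Lemma~\ref{lemma3}, merely spell out details the paper leaves implicit.
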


 \paragraph{Remark} These results imply that any MoM center-based algorithm under our paradigm admits a convergence rate of $O\left(\max\left\{ L^{1/2}n^{-1/2}, n^{-1}\sqrt{|\I|}\right\}\right) $, when equipped with squared Euclidean distance. Note that as $L\ge 1$, $\max\left\{ L^{1/2}n^{-1/2}, n^{-1}\sqrt{|\I|}\right\} = \Omega\left(n^{-1/2}\right)$. Thus, the convergence rates for MoM variants in our framework are generally slower than their ERM counterparts, for which the rate is $O(n^{-1/2})$. This is unsurprising as MoM operates on outlier-contaminated data; there is ``no free lunch" in trading off robustness for rate of convergence. However, if the number of partitions $L$ grows slowly relative to $n$ (say, $L=O(\log n)$ so that $|\cO|=O(\log n)$), then the convergence rates for MoM estimation become comparable to the ERM counterparts at  $\widetilde{O}(n^{-1/2})$. 

\section{Empirical Studies and  Performance}\label{experiments}
To validate and assess our proposed framework, we now turn to an empirical comparison of the proposed and peer clustering methods. We evaluate clustering quality under the Adjusted Rand Index (ARI) \citep{hubert1985comparing},  with values ranging between $0$ and $1$ and $1$ denoting a perfect match with the ground truth. Though it is not feasible to exhaustively survey center-based clustering methods, we consider a broad range of competitors, comparing to $k$-means \citep{macqueen1967some}, Partition Around Medoids (PAM) \citep{kaufman2009finding}, $k$-medians \citep{jain2010data}, Robust kmeans++ (RKMpp) \citep{deshpande2020robust}, Robust Continuous Clustering (RCC) \citep{shah2017robust}, Bregman Trimmed $k$-means (BTKM) \citep{fischer2020robust}, MoM $k$-means (MOMKM) \citep{klochkov2020robust} and a novel MoM variant of power $k$-means (MOMPKM) implied under the proposed framework.  An  open-source implementation of the proposed method and code for reproducing the simulation experiments are available at  \url{https://github.com/SaptarshiC98/MOMPKM}. 

We consider two thorough simulated experiments below, with additional simulations and large-scale real data results in the Appendix. While the extended comparisons are omitted for space considerations, they convey the same trends as the studies below. 
In all settings, we generate data in $p=5$ dimensions, varying the number of clusters and the outlier percentages. True centers are spaced uniformly on a grid with $\theta_k=\frac{k-1}{10}$ and observations are drawn from Gaussians around their ground truth centers with variance $0.1$. Because we generate Gaussian data, we focus here on the Euclidean case, and do not consider other Bregman divergences in the present empirical study. In all experiments, we take the number of partitions $L$ to be roughly double the number of outliers, and set the hyperparameters $\eta$ and $\alpha$ to be $1.02$ and $1$ respectively by default. Results are averaged over 20 random restarts, and all competitors are initialized and tuned according to the standard implementations described in their original papers. 

\paragraph{Experiment 1: increasing the number of clusters}
The first experiment assesses performance as the number of true clusters grows, while keeping the proportion of outliers fixed at $25\%$. Datasets are generated with the number of clusters $k$ varying between $3$ and $100$. For each setting, we create balanced clusters 
drawing $30$ points from each true center. The $25\%$ outliers are generated from a uniform distribution with support on the range of the inlying observations. We repeat this data generation process  $30$ times under each parameter setting. 

The average ARI values at convergence, plotted against the number of clusters along with error bars ($\pm$ standard deviations), are shown in the left panel of Figure \ref{fig1}. We see that the robustified version of power $k$-means implied by our framework (labeled MOMPKM) achieves the best performance here. This may be unsurprising as the recent power $k$-means method was shown to significantly reduce sensitivity to local minima (which tend to increase with $k$), while MoM further protects the algorithm from outlier influence. 

\begin{figure}
    \centering
    \includegraphics[height=0.42\textwidth,width=\textwidth]{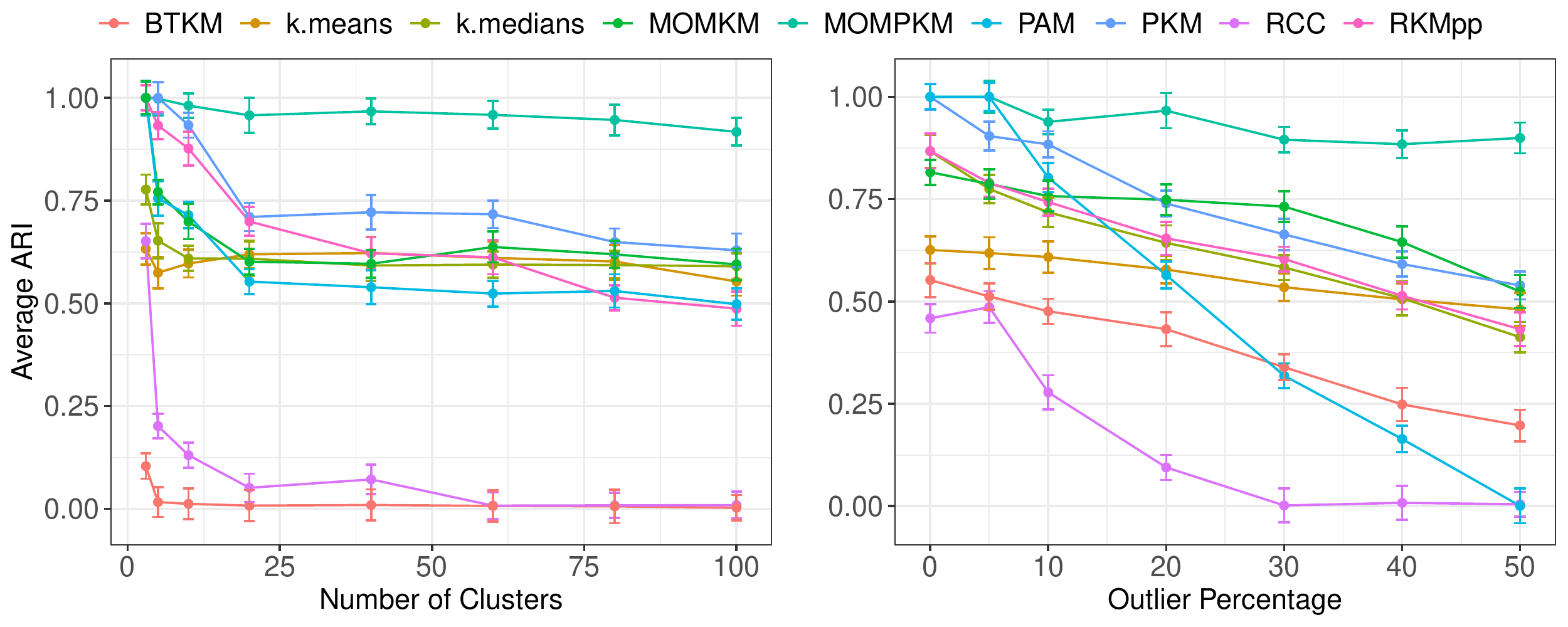}
    \caption{Average ARI values, along with error bars ($\pm$sd), comparing the peer algorithms in Experiments 1 (left) and 2 (right). MOMPKM remains relatively stable even when $k$ or outlier percentages increase, maintaining the best performance among peer methods.}
    \label{fig1} \vspace{-3pt}
\end{figure}

\paragraph{Experiment 2: increasing outlier percentage}
Following the same data generation process in Experiment 1, we now fix $k=20$ while varying the outlier percentage from $0\%$ to $50\%$. For each parameter setting, we again replicate the experiment $30$ times. Average ARI values comparing the inlying observations to their ground truths are plotted in the right panel of Figure \ref{fig1}. Not only does this study convey similar trends as Experiment 1, but we see that competing methods continue to deteriorate with increasing outliers as one might expect, while MOMPKM remarkably remains relatively stable.

We see that the ERM-based methods such as BTKM, MOMKM, and PAM struggle when there is large number of clusters.  Similarly, methods such as $k$-medians and RKMpp often stop short at poor local optima, which is quickly exacerbated by outliers despite initialization via clever seeding techniques. These phenomena are consistent with what has been reported in the literature for their non-robust counterparts \citep{xu2019power,chakraborty2020entropy}. Overall, the empirical study suggests that a robust version of power $k$-means clustering under the MoM framework shows promise to handle several data challenges at once, in line with our theoretical analysis.

\section{Discussion}
In this paper, we proposed a paradigm for center-based clustering that unifies a suite of center-based clustering algorithms. Under this view, developed a simple yet efficient framework for robust versions of such algorithms by appealing to the Median of Means (MoM) philosophy. Using gradient-based methods, the MoM objectives can be solved with the same per-iteration complexity of  Lloyd's $k$-means algorithm, largely retaining its simplicity. Importantly, we derive a thorough analysis of the statistical properties and convergence rates by establishing uniform concentration bounds under i.i.d. sampling of the data. These novel theoretical contributions  demonstrate how arguments utilizing Rademacher complexities and Dudley's chaining arguments can be leveraged in the robust clustering context. As a result, we are able to obtain error rates that do not require asymptotic assumptions, nor restrictions on the relation between $n$ and $p$. These findings recover asymptotic results such as strong consistency and $\sqrt{n}$-consistency under classical assumptions.

As shown in the paper, the robustness of MoM estimators comes at the cost of slower convergence rates compared to their ERM counterparts. We emphasize that there is no ``median-of-means magic", and that the efficacy of MoM depends on the interplay between the partitions and the outliers. If the number of partitions circumvents the impact of the outliers, the performance of MoM clustering estimates under our framework scales with the block size $b$ as $1/\sqrt{b} = \sqrt{L/n}$. Since $L$ can be chosen to be approximately $2|\cO|$, the obtained error rate is roughly $O(\sqrt{|\cO|/n})$. If $|\cO|$ scales proportionately with $n$, however, the error bound of $O(\sqrt{|\cO|/n})$ becomes meaningless. For our consistency results to hold, it is crucial that $|\cO| = o(n)$, which in turn allows us to choose $L$ that satisfies condition A\ref{aa6}. If $|\cO|=O(n^\beta)$, for some $0<\beta<1$, the error rate is $O(n^{(\beta-1)/2})$. 

This suggests possible future research directions in improving these ERM rates via finding so called ``fast rates" under additional assumptions \citep{boucheron2005theory,wainwright2019high}. Moreover, it will be fruitful to extend the results for noise distributions that satisfy only moment conditions. Recent works in convex clustering \citep{tan2015statistical,chakraborty2020biconvex} have considered sub-Gaussian models to obtain error rates. The work by \citep{biau2008performance} and  recent work by \citep{klochkov2020robust} obtain similar error rates under finite second-moment conditions using assumptions that the cluster centroids $\hth$ are bounded, and it may be possible to extend our approach using local Rademacher complexities \citep{bartlett2005local} to relax the bounded support assumption. 
 One can also seek lower bounds on the approximation error or can explore high-dimensional robust center-based clustering under the proposed paradigm. Finally, we have not explored the implementation and empirical performance of Bregman versions of our MoM estimator, for instance with application to data arising from mixtures of exponential families other than the Gaussian case. These interesting directions remain open for future work.
\bibliographystyle{apalike}

\appendix

\section{Proofs of Lemmas}
For the theoretical exposition, we first establish the following Lemmas. Lemma \ref{lem4} proves that the derivative of the function $\phi$ is bounded in the $\ell_2$-norm when the domain is restricted to the support of $P$.
\begin{lem}\label{lem4}
Under A\ref{a2},
$\|\nabla \phi(\bx)\|_2 \le  H_p M \sqrt{p}$, for all $\bx \in [-M,M]^p$.
\end{lem}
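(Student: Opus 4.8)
The plan is to exploit the normalization $\phi(\mathbf{0})=\nabla\phi(\mathbf{0})=0$ (adopted without loss of generality in Section \ref{setup}) together with the Lipschitz bound in A\ref{a2}. First I would observe that $\mathbf{0}\in[-M,M]^p$, so A\ref{a2} may be applied with the choice $\by=\mathbf{0}$, giving $\|\nabla\phi(\bx)-\nabla\phi(\mathbf{0})\|_2\le H_p\|\bx-\mathbf{0}\|_2$ for every $\bx\in[-M,M]^p$. Since $\nabla\phi(\mathbf{0})=0$, this reduces to $\|\nabla\phi(\bx)\|_2\le H_p\|\bx\|_2$, turning the \emph{relative} Lipschitz estimate into an \emph{absolute} bound on the gradient.

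The remaining step is to control $\|\bx\|_2$ uniformly over the hypercube. For $\bx\in[-M,M]^p$ each coordinate satisfies $|x_i|\le M$, so $\|\bx\|_2=\bigl(\sum_{i=1}^p x_i^2\bigr)^{1/2}\le (pM^2)^{1/2}=M\sqrt{p}$. Chaining this with the previous display yields $\|\nabla\phi(\bx)\|_2\le H_pM\sqrt{p}$, which is precisely the claimed inequality.

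There is essentially no obstacle here; the only point to be careful about is that the normalization at the origin is genuinely in force, since it is what anchors the Lipschitz inequality and produces a bound independent of $\bx$. (Were one unwilling to assume $\nabla\phi(\mathbf{0})=0$, the same reasoning would still give the finite bound $\|\nabla\phi(\bx)\|_2\le\|\nabla\phi(\mathbf{0})\|_2+H_pM\sqrt{p}$ on the compact cube, which would suffice for all subsequent arguments up to an additive constant.)
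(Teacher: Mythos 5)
Your proof is correct and matches the paper's argument exactly: apply A\ref{a2} with $\by=\mathbf{0}$, invoke the normalization $\nabla\phi(\mathbf{0})=0$ to get $\|\nabla\phi(\bx)\|_2\le H_p\|\bx\|_2$, and bound $\|\bx\|_2\le M\sqrt{p}$ on the cube. Nothing further is needed.
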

\begin{proof}
From A\ref{a2},
we observe that
\begin{align*}
  \|\nabla \phi(\bx) - \nabla \phi(\boldsymbol{0})\|_2 \le  H_p \|\bx\|_2 \implies & \|\nabla \phi(\bx)\|_2 \le H_p \|\bx\|_2 \le H_p M \sqrt{p}.
\end{align*}
\end{proof}
Lemma \ref{lem5} essentially proves that the function $\phi$ is Lipschitz with Lipschitz constant $H_p M \sqrt{p}$ on $[-M,M]^p$.
\begin{lem}\label{lem5}
Under A\ref{a2}, 
for all $\bx,\by \in [-M,M]^p$, $\phi(\cdot)$ is  $2 H_p M \sqrt{p}$-Lipschitz, i.e. 
\[|\phi(\bx) - \phi(\by)| \le H_p M \sqrt{p} \|\bx-\by\|_2.\]
\end{lem}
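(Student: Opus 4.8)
The plan is to prove that $\phi$ is Lipschitz on the hypercube $[-M,M]^p$ by exploiting the fact that this set is convex, together with the gradient bound from Lemma~\ref{lem4}. Since $[-M,M]^p$ is convex, for any $\bx,\by \in [-M,M]^p$ the entire segment $\{(1-t)\by + t\bx : t \in [0,1]\}$ lies in $[-M,M]^p$, so I may apply the mean value theorem (or the fundamental theorem of calculus) to the scalar function $g(t) := \phi((1-t)\by + t\bx)$.

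First I would write $\phi(\bx) - \phi(\by) = g(1) - g(0) = \int_0^1 g'(t)\,dt = \int_0^1 \langle \nabla\phi((1-t)\by + t\bx), \bx - \by\rangle\,dt$. Then by Cauchy--Schwarz, $|\langle \nabla\phi((1-t)\by + t\bx), \bx - \by\rangle| \le \|\nabla\phi((1-t)\by + t\bx)\|_2 \,\|\bx-\by\|_2$. Since $(1-t)\by + t\bx \in [-M,M]^p$ for every $t \in [0,1]$, Lemma~\ref{lem4} gives $\|\nabla\phi((1-t)\by + t\bx)\|_2 \le H_p M \sqrt{p}$. Substituting and integrating the constant bound over $t \in [0,1]$ yields $|\phi(\bx) - \phi(\by)| \le H_p M \sqrt{p}\,\|\bx - \by\|_2$, which is exactly the claimed inequality.

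I do not anticipate a genuine obstacle here; the only points requiring minor care are (i) noting that convexity of $[-M,M]^p$ is what licenses the use of Lemma~\ref{lem4} along the whole segment, and (ii) the slight mismatch between the statement's preamble (which mentions a constant $2H_p M\sqrt{p}$) and the displayed inequality (which has constant $H_p M\sqrt{p}$) --- I would simply prove the displayed bound, since a $C$-Lipschitz bound is a fortiori a $2C$-Lipschitz bound. An alternative, avoiding integration, is to invoke the mean value theorem directly: there exists $\bxi$ on the segment with $\phi(\bx) - \phi(\by) = \langle \nabla\phi(\bxi), \bx - \by\rangle$, and then apply Cauchy--Schwarz and Lemma~\ref{lem4} at the single point $\bxi$. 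Either route is a few lines.
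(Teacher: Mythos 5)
Your proposal is correct and follows essentially the same route as the paper's proof: the mean value theorem (or its integral form) along the segment between $\bx$ and $\by$, convexity of $[-M,M]^p$ to keep the intermediate point in the domain, then Cauchy--Schwarz together with Lemma \ref{lem4}. Your observation about the constant mismatch in the statement's preamble versus the displayed inequality is also accurate; the displayed bound with constant $H_p M \sqrt{p}$ is what the paper proves and uses.
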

\begin{proof}
From the mean value theorem,
\[\phi(\bx) - \phi(\by) = \langle \nabla \phi(\bxi) , \bx - \by \rangle, \]
for some $\bxi$ in the convex combinations of $\bx$ and $\by$. Clearly, $\bxi \in [-M,M]^p$, due to the convexity of $[-M,M]^p$. Now by the Cauchy-Schwartz inequality and Lemma \ref{lem4},
\[|\phi(\bx) - \phi(\by)| \le  \| \nabla \phi(\bxi)\|_2 \|\bx - \by\|_2 \le H_p M \sqrt{p} \|\bx-\by\|_2.\]
\end{proof}

Lemma \ref{lem3} proves that the function $f_{\bTheta}$, as a function of $\bTheta$, is Lipschitz with respect to the $\|\cdot\|_\infty$ norm.
\begin{lem}\label{lem3}
Under assumptions A\ref{a0}--\ref{a2}, for any $\bTheta,\bTheta^\prime \in [-M,M]^p$, 
\[\|f_{\bTheta} - f_{\bTheta^\prime}\|_\infty \le 4 \lc   H_p M \sqrt{p}   \sum_{j=1}^k\|\btheta_j^\prime-\btheta_j\|_2.\]
Here, $\bTheta=[\btheta_1^\top, \dots,\btheta_k^\top]^\top $ and $\bTheta=[\btheta_1^{\prime\top}, \dots,\btheta_k^{\prime\top}]^\top $.
\end{lem}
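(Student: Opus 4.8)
The plan is to fix an arbitrary evaluation point $\bx \in [-M,M]^p$ and bound $|f_{\bTheta}(\bx) - f_{\bTheta'}(\bx)|$ uniformly in $\bx$, since $\|f_{\bTheta} - f_{\bTheta'}\|_\infty$ is the supremum of this quantity over the support $[-M,M]^p$. First I would peel off the outer aggregation function using assumption A\ref{a1}: recalling $f_{\bTheta}(\bx) = \Psi_{\balpha}(d_\phi(\bx,\btheta_1),\dots,d_\phi(\bx,\btheta_k))$, the $\lc$-Lipschitz property of $\Psi_{\balpha}$ in $\|\cdot\|_1$ gives
\[|f_{\bTheta}(\bx) - f_{\bTheta'}(\bx)| \le \lc \sum_{j=1}^k \big|d_\phi(\bx,\btheta_j) - d_\phi(\bx,\btheta_j')\big|,\]
so it suffices to establish the per-center bound $|d_\phi(\bx,\btheta) - d_\phi(\bx,\btheta')| \le 4 H_p M\sqrt{p}\,\|\btheta - \btheta'\|_2$ for all $\btheta,\btheta' \in [-M,M]^p$, and then sum over $j$ and take the supremum over $\bx$.

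For this per-center estimate I would expand $d_\phi(\bx,\btheta) = \phi(\bx) - \phi(\btheta) - \langle\nabla\phi(\btheta), \bx - \btheta\rangle$; the $\phi(\bx)$ terms cancel, and after adding and subtracting $\langle\nabla\phi(\btheta), \bx - \btheta'\rangle$ the difference organizes into three pieces,
\[d_\phi(\bx,\btheta) - d_\phi(\bx,\btheta') = \big(\phi(\btheta') - \phi(\btheta)\big) + \big\langle \nabla\phi(\btheta') - \nabla\phi(\btheta),\, \bx - \btheta'\big\rangle + \big\langle \nabla\phi(\btheta),\, \btheta - \btheta'\big\rangle.\]
The first piece is at most $H_p M\sqrt{p}\,\|\btheta - \btheta'\|_2$ by Lemma \ref{lem5}. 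The second is handled by Cauchy--Schwarz together with A\ref{a2} (which bounds $\|\nabla\phi(\btheta')-\nabla\phi(\btheta)\|_2 \le H_p\|\btheta-\btheta'\|_2$) and the crude diameter bound $\|\bx - \btheta'\|_2 \le 2M\sqrt{p}$ valid on $[-M,M]^p$, yielding $2 H_p M\sqrt{p}\,\|\btheta - \btheta'\|_2$. The third is bounded by Cauchy--Schwarz and Lemma \ref{lem4}, giving $H_p M\sqrt{p}\,\|\btheta - \btheta'\|_2$. Adding the three contributions produces the constant $1+2+1 = 4$, hence the claimed per-center bound; substituting back and taking $\sup_{\bx \in [-M,M]^p}$ finishes the proof.

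The triangle-inequality bookkeeping is routine; the one place requiring care is the choice of splitting of $d_\phi(\bx,\btheta) - d_\phi(\bx,\btheta')$ so that each summand is controllable \emph{without} invoking a Hessian, since A\ref{a2} supplies only Lipschitzness of $\nabla\phi$, not twice-differentiability of $\phi$ — the grouping above is engineered precisely so that Lemmas \ref{lem4}--\ref{lem5} and A\ref{a2} each apply to exactly one term. I would also double-check that every point at which these lemmas are invoked ($\btheta$, $\btheta'$, and $\bx$) lies in $[-M,M]^p$, which holds by the hypothesis on $\bTheta,\bTheta'$ and on the support of $P$, and verify that the constants combine to exactly $4 H_p M\sqrt{p}$ rather than something larger.
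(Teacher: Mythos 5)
Your proposal is correct and follows essentially the same route as the paper's own proof: Lipschitz reduction via A\ref{a1} to the per-center differences, the identical three-term regrouping of $d_\phi(\bx,\btheta_j)-d_\phi(\bx,\btheta_j^\prime)$, and the same invocations of Lemma \ref{lem5}, A\ref{a2} with Cauchy--Schwarz and the diameter bound $\|\bx-\btheta_j^\prime\|_2\le 2M\sqrt{p}$, and Lemma \ref{lem4}, yielding the constants $1+2+1=4$. No gaps.
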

\begin{proof}

\begingroup
\allowdisplaybreaks
\begin{align*}
    &\|f_{\bTheta} - f_{\bTheta^\prime}\|_\infty \\
    & = \sup_{\bx \in [-M,M]^p} \left|  \Psi_{\balpha}(d_\phi(\bx,\btheta_1),\dots,d_\phi(\bx,\btheta_k)) -  \Psi_{\balpha}(d_\phi(\bx,\btheta_1^\prime),\dots,d_\phi(\bx,\btheta_k^\prime))\right|\\
    & \le \lc\sum_{j=1}^k |d_\phi(\bx,\btheta_j) - d_\phi(\bx,\btheta_j^\prime)|\\
    & = \lc\sum_{j=1}^k |\phi(\btheta_j^\prime)-\phi(\btheta_j) + \langle \nabla \phi(\btheta_j^\prime), \bx - \btheta_j^\prime\rangle - \langle \nabla \phi(\btheta_j), \bx - \btheta_j\rangle |\\
    & = \lc\sum_{j=1}^k |\phi(\btheta_j^\prime)-\phi(\btheta_j) + \langle \nabla \phi(\btheta_j^\prime) - \nabla \phi(\btheta_j), \bx - \btheta_j^\prime\rangle + \langle \nabla \phi(\btheta_j), \btheta_j- \btheta_j^\prime\rangle |\\
    & \le \lc\sum_{j=1}^k \left( |\phi(\btheta_j^\prime)-\phi(\btheta_j)| + |\langle \nabla \phi(\btheta_j^\prime) - \nabla \phi(\btheta_j), \bx - \btheta_j^\prime\rangle| + |\langle \nabla \phi(\btheta_j), \btheta_j- \btheta_j^\prime\rangle |\right)\\
    & \le \lc\sum_{j=1}^k \left(  H_p M \sqrt{p} \|\btheta_j^\prime-\btheta_j\|_2 + \| \nabla \phi(\btheta_j^\prime) - \nabla \phi(\btheta_j)\|_2 \|\bx - \btheta_j^\prime\|_2 + \| \nabla \phi(\btheta_j)\|_2 \|\btheta_j- \btheta_j^\prime \|_2\right)\\
    & \le \lc\sum_{j=1}^k \left( H_p M \sqrt{p} \|\btheta_j^\prime-\btheta_j\|_2 + H_p \| \btheta_j^\prime - \btheta_j\|_2 \times 2 \sqrt{p}M + H_p M \sqrt{p}  \|\btheta_j- \btheta_j^\prime \|_2 \right)\\
    & \le 4 \lc  H_p M \sqrt{p}    \sum_{j=1}^k\|\btheta_j^\prime-\btheta_j\|_2
\end{align*}
\endgroup
\end{proof}
\section{Proofs from Section \ref{theory 1}}
\subsection{Proof of Lemma \ref{lemma1}}
\begin{proof}
Let $J(\bx)=d_\phi(\bx,\btheta)$. Since $P_\C(\btheta)$ minimizes $J(\cdot)$ over $\C$, there exists a subgradient $\bd \in \partial J(P_\C(\btheta))$ such that 
\begin{equation}
\label{aaa1}
\langle \bd, \bx-P_\C(\btheta)\rangle \ge 0 .    
\end{equation}

We note that $J(P_\C(\btheta))=\{\nabla\phi(P_\C(\btheta)) - \nabla \phi(\btheta)\}$. Thus,  from equation \eqref{aaa1}, 
\begin{equation}
    \langle \nabla\phi(P_\C(\btheta)) - \nabla \phi(\btheta), \bx-P_\C(\btheta)\rangle \ge 0 .
\end{equation}
We now observe that, 
\[
    d_\phi(\bx,\btheta) - d_{\phi}(\bx,P_\C(\btheta)) - d_\phi(P_\C(\btheta),\btheta) = \langle \nabla\phi(P_\C(\btheta)) - \nabla \phi(\btheta), \bx-P_\C(\btheta)\rangle \ge 0.
\]
Hence the result.
\end{proof}
\subsection{Proof of Lemma \ref{lemma2}}
\begin{proof}
Suppose $\bTheta=\{\btheta_1,\dots,\btheta_k\}$. We take $\C=[-M,M]^{k \times p}$ and $\bTheta^\prime= \{P_\C(\btheta_1),\dots,P_\C(\btheta_k)\}$. Clearly $\C$ is a convex set. Thus, from Lemma \ref{lemma1},
we observe that 
\begingroup
\allowdisplaybreaks
\begin{align*}
  &d_\phi(\bx,\btheta_j) \ge d_{\phi}(\bx,P_\C(\btheta_j))+d_\phi(P_\C(\btheta_j),\btheta_j) \ge d_{\phi}(\bx,P_\C(\btheta_j))\, \quad \forall\, j=1,\dots,k. \\ 
  \implies & \Psi_{\balpha}\left(d_\phi(\bx,P_\C(\btheta_1)),\dots,d_\phi(\bx,P_\C(\btheta_k))\right) \le \Psi_{\balpha}\left(d_\phi(\bx,\btheta_1),\dots,d_\phi(\bx,\btheta_k)\right)\\
  \implies & \int \Psi_{\balpha}\left(d_\phi(\bx,P_\C(\btheta_1)),\dots,d_\phi(\bx,P_\C(\btheta_k))\right) dQ\le \int \Psi_{\balpha}\left(d_\phi(\bx,\btheta_1),\dots,d_\phi(\bx,\btheta_k)\right) dQ\\
\implies & Q f_{\bTheta^\prime} \, \, \le \, \,Q f_{\bTheta}
\end{align*}
\endgroup
\end{proof}
\subsection{Proof of Lemma \ref{lem1}}
\begin{proof}
We first divide the set $[-M,M]$ into a small bins, each with size $\epsilon$. Denote $\gamma_i=-M+i \epsilon$, for $i=1,\dots, \lfloor \frac{2M}{\epsilon}\rfloor$, 
and let $\Gamma_\epsilon = \left\{\gamma_i \, \| \, i \in \{1,\dots, \lfloor \frac{2M}{\epsilon}\rfloor\}\right\}$. If $\epsilon>2M$, we take $\Gamma_\epsilon=\{0\}$. Clearly, $|\Gamma_\epsilon| = \max\{\lfloor \frac{2M}{\epsilon}\rfloor, 1\}$. From the construction of $\Gamma_\epsilon$, for all $x \in [-M,M]$, there exists $i \in \left[ |\Gamma_\epsilon|\right]$, such that, $|x-\gamma_i|\le \epsilon$. We take $\epsilon = \left(4 \lc H_p M   k p \right)^{-1} \delta$. We define
$$\bTheta_\delta =\left\{\bTheta=\left((\theta_{i\ell})\right):  \theta_{i\ell} \in \Gamma_\epsilon \right\}.$$
Then immediately we see
\[|\bTheta_\delta| =  \left(\max\left\{\left\lfloor \frac{2M}{\epsilon}\right\rfloor,1\right\}\right)^{kp}.\]

For any $\bTheta \in [-M,M]^p$, we can construct $\bTheta^\prime \in \bTheta_\delta$, such that,
\(
|\theta_{i\ell} - \theta^\prime_{i\ell}|  \le \epsilon.
\)
 From Lemma \ref{lem3}, we observe that,  
\begin{align*}
    \|f_{\bTheta} - f_{\bTheta^\prime}\|_\infty & \le 4 \lc H_p M \sqrt{p}     \sum_{j=1}^k\|\btheta_j^\prime-\btheta_j\|_2.\\
    & \le 4 \lc H_p M \sqrt{p}  k\sqrt{p}\epsilon\\
    & = 4 \lc H_p M   k p \epsilon\\
    & = \delta.
\end{align*}
Thus, $\F_\delta = \{f_{\bTheta}: \bTheta \in \bTheta_\delta\}$ constitutes a $\delta$-cover of $\F$ under the $\|\cdot\|_\infty$ norm. Hence,
\begin{align*}
    N(\delta; \F, \|\cdot\|_\infty) \le |\F_\delta| \le |\bTheta_\delta|  &=  \left(\max\left\{\left\lfloor \frac{2M}{\epsilon}\right\rfloor,1\right\}\right)^{kp} =  \left(\max\left\{\left\lfloor \frac{8M^2 \lc H_p    k p}{\delta}\right\rfloor,1\right\}\right)^{kp}.
\end{align*}
\end{proof} 
\subsection{Proof of Lemma \ref{lem2}}
\begin{proof}
From Lemma \ref{lem3}, we observe that,
\begingroup
\allowdisplaybreaks
\begin{align*}
    \text{diam}(\F) & = \sup_{\bTheta,\bTheta^\prime \in [-M,M]^{k \times p} }\|f_{\bTheta} - f_{\bTheta^\prime}\|_\infty \\
    & \le 4 H_p M \sqrt{p} \lc    \sup_{\bTheta,\bTheta^\prime \in [-M,M]^{k \times p} } \sum_{j=1}^k\|\btheta_j^\prime-\btheta_j\|_2\\
        & \le 4 H_p M \sqrt{p} \lc \times 2 k M \sqrt{p}   \\
        & = 8 \lc H_p M^2 k p.
\end{align*}
\endgroup
\end{proof}

\subsection{Proof of Lemma \ref{lemma3}}
\begin{proof}
From the non-negativity of $\Psi_{\balpha}(\cdot)$, we get, $\Psi_{\balpha}(d_\phi(\bx,\btheta_1),\dots,d_\phi(\bx,\btheta_k)) \ge 0$, for any $\bx \in [-M,M]^p$ and $\bTheta \in [-M,M]^{k \times p}$. For any $\bbeta \in \Real^k_{\ge 0 }$, from A\ref{a1},
we get,
\[\Psi_{\balpha}(\bbeta) = |\Psi_{\balpha}(\bbeta) - \Psi_{\balpha}(\mathbf{0})| \le \lc \|\bbeta-\mathbf{0}\|_1 = \|\bbeta\|_1.\]
Taking $\bbeta=(d_\phi(\bx,\btheta_1),\dots,d_\phi(\bx,\btheta_k))^\top$, we get,
\begingroup
\allowdisplaybreaks
\begin{align}
    &\Psi_{\balpha}(d_\phi(\bx,\btheta_1),\dots,d_\phi(\bx,\btheta_k)) \nonumber \\
    & \le \lc \sum_{j=1}^k d_\phi(\bx,\btheta_j) \nonumber \\
    &=  \lc \sum_{j=1}^k \left(\phi(\bx) -\phi(\btheta_j) - \langle \nabla \phi(\btheta_j) , \bx -\btheta_j \rangle\right) \nonumber \\ 
    & \le \lc \sum_{j=1}^k \left(|\phi(\bx) -\phi(\btheta_j)| +| \langle \nabla \phi(\btheta_j) , \bx -\btheta_j \rangle|\right) \nonumber \\ 
    & \le \lc \sum_{j=1}^k \left(H_p M \sqrt{p}\|\bx-\btheta_j\|_2 + \| \nabla \phi(\btheta_j)\|_2 \| \bx -\btheta_j \|_2\right) \label{qqq1} \\ 
    & \le 2 \lc H_p M \sqrt{p} \sum_{j=1}^k \|\bx-\btheta_j\|_2 \label{qqq2} \\
    & \le 4 \lc H_p M^2 p k . \nonumber
\end{align}
\endgroup
Here inequality \eqref{qqq1} follows from Cauchy-Schwartz inequality and Lemma \ref{lem5}. Inequality \eqref{qqq2} follows from Lemma \ref{lem4}.
\end{proof}
\subsection{Proof of Theorem \ref{rademacher}}
\begin{proof}
Let $\Delta = 8 H_p M^2 k^{1-1/s} p$. We construct a decreasing sequence $\{\delta_i\}_{i \in \mathbb{N}}$ as follows. Take $\delta_1:=\text{diam}(\F)=\Delta$ (the last equality follows from Lemma \ref{lem2}) 
and $\delta_{i+1}=\frac{1}{2}\delta_i$. Let $\F_i$ be a minimal $\delta_i$ cover of $\F$, i.e. $|\F_i|= N(\delta_i;\F,\|\cdot\|_\infty)$. Now denote $f_i$ to be the closest element of $f$ in $\F_i$ (with ties broken arbitrarily). We can thus write,
\begin{align*}
    \mathbb{E}\sup_{f \in \F} \frac{1}{n}\sum_{i=1}^n\epsilon_i f(\bX_i) \le \xi_1+\xi_2+\xi_3,
\end{align*}
where
\begin{align}
    & \xi_1 = \mathbb{E}\sup_{f \in \F} \frac{1}{n}\sum_{i=1}^n \epsilon_i (f(\bX_i)-f_m(\bX_i)), \\
    & \xi_2 = \sum_{j=1}^{m-1}\mathbb{E}\sup_{f \in \F} \frac{1}{n}\sum_{i=1}^n \epsilon_i (f_{j+1}(\bX_i)-f_j(\bX_i)), \\
    & \xi_3 = \mathbb{E}\sup_{f \in \F} \frac{1}{n}\sum_{i=1}^n \epsilon_i f_1(\bX_i).
\end{align}
Since we can pick $f_1$ arbitrarily from $\F$ (as $\delta_1=\text{diam}(\F)$), $\xi_3=0$. To bound $\xi_1$, we observe that,
\[ \xi_1 = \mathbb{E}\sup_{f \in \F} \frac{1}{n}\sum_{i=1}^n \epsilon_i (f(\bX_i)-f_m(\bX_i)) \le \mathbb{E}\sup_{f \in \F} \frac{1}{n}\sqrt{\left(\sum_{i=1}^n \epsilon_i^2\right) \left(\sum_{i=1}^n(f(\bX_i)-f_m(\bX_i))^2\right)} \le \delta_m\]
To bound $\xi_2$, we observe that, 
\[\|f_{j+1}-f_j\|_\infty \le \|f_{j+1}-f\|_\infty + \|f-f_j\|_\infty \le \delta_{j+1} + \delta_j.\]
Now appealing to Massart's lemma \citep{mohri2018foundations}, we get,
\begin{align*}
 \mathbb{E}\sup_{f \in \F} \frac{1}{n}\sum_{i=1}^n \epsilon_i (f_{j+1}(\bX_i)-f_j(\bX_i))    & \le \frac{(\delta_{j+1}+\delta_j) \sqrt{2 \log \left(N(\delta_{j}; \F, \|\cdot\|_\infty)N(\delta_{j+1}; \F, \|\cdot\|_\infty)\right)}}{\sqrt{n}}\\
 & \le \frac{2(\delta_{j+1}+\delta_j) \sqrt{ \log N(\delta_{j+1}; \F, \|\cdot\|_\infty)}}{\sqrt{n}}
\end{align*}
Thus, 
\[\xi_2= \sum_{j=1}^{m-1}\mathbb{E}\sup_{f \in \F} \frac{1}{n}\sum_{i=1}^n \epsilon_i (f_{j+1}(\bX_i)-f_j(\bX_i)) \le \sum_{j=1}^{m-1}\frac{2(\delta_{j+1}+\delta_j) \sqrt{ \log N(\delta_{j+1}; \F, \|\cdot\|_\infty)}}{\sqrt{n}}\]
Combining the bounds on $\xi_1$, $\xi_2$ and $\xi_3$, we get,
\begin{equation}\label{eq1}
    \mathbb{E}\sup_{f \in \F} \frac{1}{n}\sum_{i=1}^n\epsilon_i f(\bX_i)  \le \delta_m + \frac{2}{\sqrt{n}}\sum_{j=1}^{m-1}(\delta_{j+1}+\delta_j) \sqrt{ \log N(\delta_{j+1}; \F, \|\cdot\|_\infty)}.
\end{equation}
From the construction of $\{\delta_i\}_{i \ge 1}$, we know, $\delta_{j+1} + \delta_j = 6(\delta_{j+1} - \delta_{j+2})$. Hence from equation \eqref{eq1}, we get, 
\begin{align*}
    \mathbb{E}\sup_{f \in \F} \frac{1}{n}\sum_{i=1}^n\epsilon_i f(\bX_i) &  \le \delta_m + \frac{2}{\sqrt{n}}\sum_{j=1}^{m-1}(\delta_{j+1}+\delta_j) \sqrt{ \log N(\delta_{j+1}; \F, \|\cdot\|_\infty)}\\
    & = \delta_m + \frac{12}{\sqrt{n}}\sum_{j=1}^{m-1}(\delta_{j+1}-\delta_{j+2}) \sqrt{ \log N(\delta_{j+1}; \F, \|\cdot\|_\infty)}\\
    & \le \delta_m + \frac{12}{\sqrt{n}} \int_{\delta_{m+1}}^{\delta_2} \sqrt{ \log N(\epsilon; \F, \|\cdot\|_\infty)}d\epsilon
\end{align*}
Taking limits as $m \to \infty$ in the above equation, we get,
\[\mathbb{E}\sup_{f \in \F} \frac{1}{n}\sum_{i=1}^n\epsilon_i f(\bX_i) \le \frac{12}{\sqrt{n}} \int_0^{\Delta} \sqrt{ \log N(\epsilon; \F, \|\cdot\|_\infty)}d\epsilon. \]
From Lemma \ref{lem1}, 
plugging in the value of $N(\epsilon;\F,\|\cdot\|_\infty)$, we get,
\begingroup
\allowdisplaybreaks
\begin{align*}
  \mathcal{R}_n(\F) &\le \frac{12}{\sqrt{n}}\int_{0}^\Delta \sqrt{\log N(\epsilon;\F,\|\cdot\|_\infty)} d\epsilon\\ 
  &\le \frac{12}{\sqrt{n}} \int_0^\Delta\sqrt{kp \log \left( \max\left\{\frac{ \Delta}{\epsilon},1\right\}\right)} d\epsilon\\
  & = \frac{12}{\sqrt{n}}\int_0^\Delta \sqrt{kp \log \left( \frac{ \Delta}{\epsilon}\right)} d\epsilon\\
  & = 12 \sqrt{\frac{kp}{n}} \Delta \int_0^\infty 2t^2 e^{-t^2} dt \\
  & =12 \sqrt{\frac{kp}{n}} \Delta \int_0^\infty u^{\frac{3}{2}-1} e^{-u} du  \\
  & = 12 \sqrt{\frac{kp}{n}} \Delta \Gamma(3/2) \\
  & = 6 \sqrt{\frac{kp\pi}{n}} \times 8 \lc H_p M^2 k p\\
  & = 48 \sqrt{\pi} \lc H_p M^2  (kp)^{3/2} n^{-1/2}.
\end{align*}
\endgroup
\end{proof}
\subsection{Proof of Theorem \ref{thm2}}
\begin{proof}
From Lemma, \ref{lemma3}, 
we observe that $\sup_{f \in \F} \|f\|_\infty \le 4 \lc H_p M^2 p k$. Under assumption A\ref{a0},
we observe that, with probability at least $1-\delta$,
\begin{align}
    \sup_{f \in \F} |P_n f -Pf| & \le 2 \mathcal{R}_n(\F) + \sup_{f \in \F} \|f\|_\infty \sqrt{\frac{\log(2/\delta)}{2n}} \nonumber\\
    & \le  96 \sqrt{\pi} \lc H_p M^2  (kp)^{3/2} n^{-1/2} + 4 \lc H_p M^2 p k \sqrt{\frac{\log(2/\delta)}{2n}} \label{q2}.
\end{align}
Inequality \eqref{q2} follows from appealing to Theorem \ref{rademacher}
and observing that $\sup_{f \in \F} \|f\|_\infty \le 4 \lc H_p M^2 p k$.
\end{proof}
\subsection{Proof of Theorem \ref{strong consistency}}
\begin{proof}
(Proof of Strong consistency) We will first show $|P f_{\hth} - P f_{\bTheta^\ast}| \xrightarrow{a.s.} 0$. To show this let $C=\max\{192 \sqrt{\pi} \lc H_p M^2  (kp)^{3/2},  8 \lc H_p M^2 p k\}$. Then from Theorem \ref{thm2},
we observe that with probability at least $1-\delta$,
\begin{equation}\label{q4}
    |Pf_{\hth} - P f_{\bTheta^\ast}|  \le \frac{C}{\sqrt{n}} + C \sqrt{\frac{\log(2/\delta)}{2n}}.
\end{equation}
Fix $\epsilon>0$. If $n \ge 4C^2/\epsilon^2$ and $\delta = 2 \exp\left(-\frac{n\epsilon^2}{2C^2}  \right)$, the RHS of \eqref{q4} becomes no bigger than $\epsilon$. Thus,
\[\sP \left( |Pf_{\hth} - P f_{\bTheta^\ast}| > \epsilon \right) \le 2 \exp\left(-\frac{n\epsilon^2}{2C^2}  \right), \quad \forall \, n \ge 4C^2/\epsilon^2 .\]
Since the series $\sum_{n=1}^\infty \exp\left(-\frac{n\epsilon^2}{2C^2}  \right) $ is convergent from the above equation, so is $\sP \left( |Pf_{\hth} - P f_{\bTheta^\ast}| > \epsilon \right)$. Hence, $Pf_{\hth} \xrightarrow{a.s.} P f_{\bTheta^\ast}$. Thus, for any $\epsilon>0$, $P f_{\hth} \le  p f_{\bTheta^\ast} + \epsilon$ almost surely w.r.t. $[P]$ for $n$ sufficiently large. From assumption A\ref{a5},
$\text{diss}(\hth,\bTheta^\ast) \le  \eta$, almost surely w.r.t. $[P]$, for any prefixed $\eta>0$, and $n$ large. Thus, $\text{diss}(\hth, \bTheta^\ast) \xrightarrow{a.s.}0$, which proves the result.

(Proof of $\sqrt{n}$-consistency)
Fix $\delta \in (0,1]$. From Theorem \ref{thm2}, 
with probability at least $1-\delta$,
\[|Pf_{\hth} - P f_{\bTheta^\ast}|  \le 192 \sqrt{\pi} \lc H_p M^2  (kp)^{3/2} n^{-1/2} + 8 \lc H_p M^2 p k \sqrt{\frac{\log(2/\delta)}{2n}} = O(n^{-1/2}).\]
Hence, $\sqrt{n}|Pf_{\hth} - P f_{\bTheta^\ast}| = O(1)$ with probability at least $1-\delta$. Thus,  $\exists \, C_\delta$, such that \[\sP\left( \sqrt{n}|Pf_{\hth} - P f_{\bTheta^\ast}| \le C_\delta\right) \ge 1-\delta,\] for all $n$ large enough. Hence, $|P f_{\hth} - P f_{\bTheta^\ast}| = O_P (n^{-1/2})$.
\end{proof}
\section{Proofs from Section \ref{mom}}
\subsection{Proof of Lemma \ref{spmom}}
\begin{proof}
Suppose $\bTheta=\{\btheta_1,\dots,\btheta_k\}$. We take $\C=[-M,M]^{k \times p}$ and $\bTheta^\prime= \{P_\C(\btheta_1),\dots,P_\C(\btheta_k)\}$. Clearly $\C$ is convex. Let $\mathcal{L}\subset\{1,\dots,L\}$ be the set of all partitions which do not contain an outlier. Thus, from Lemma \ref{lemma1},
we observe that 
\begin{align*}
  &d_\phi(\bX_i,\btheta_j) \ge d_{\phi}(\bX_i,P_\C(\btheta_j))+d_\phi(P_\C(\btheta_j),\btheta_j) \ge d_{\phi}(\bX_i,P_\C(\btheta_j))\, \forall\, j=1,\dots,k \text{ and } i \in \mathcal{I}  \\ 
  \implies & \Psi_{\balpha}\left(d_\phi(\bX_i,P_\C(\btheta_1)),\dots,d_\phi(\bX,P_\C(\btheta_k))\right) \le \Psi_{\balpha}\left(d_\phi(\bX_i,\btheta_1),\dots,d_\phi(\bX,\btheta_k)\right)\, \forall \, i \in \I\\
  \implies & \sum_{i \in B_\ell}\Psi_{\balpha}\left(d_\phi(\bX_i,P_\C(\btheta_1)),\dots,d_\phi(\bX,P_\C(\btheta_k))\right) \le \sum_{i \in B_\ell} \Psi_{\balpha}\left(d_\phi(\bX_i,\btheta_1),\dots,d_\phi(\bX,\btheta_k)\right)\, \forall \, \ell \in \mathcal{L}\\
  \implies & \frac{1}{b} \sum_{i \in B_\ell} f_{\bTheta^\prime }(\bX_i)\le \frac{1}{b} \sum_{i \in B_\ell} f_{\bTheta }(\bX_i)\,\forall \, \ell \in \mathcal{L}
\end{align*}
Now since $|\mathcal{L}| > |\mathcal{L}^C|$ (from assumption \ref{aa3}), 
\begin{align*}
    & \text{Median}\left( \frac{1}{b} \sum_{i \in B_1} f_{\bTheta^\prime}(\bX_i), \dots, \frac{1}{b} \sum_{i \in B_L} f_{\bTheta^\prime}(\bX_i)\right) \le \text{Median}\left( \frac{1}{b} \sum_{i \in B_1} f_{\bTheta}(\bX_i), \dots, \frac{1}{b} \sum_{i \in B_L} f_{\bTheta}(\bX_i)\right) \\
     & \implies \text{MoM}_L^n(\bTheta^\prime) \le \text{MoM}_L^n(\bTheta)
\end{align*}
\end{proof}
\subsection{Proof of Theorem \ref{mom thm}}
\begin{proof} For notational simplicity let  $P_{B_\ell}$ denote the empirical distribution of $\{\bX_i\}_{i \in B_\ell}$. Suppose $\epsilon>0$. We will first bound the probability of $\sup_{\bTheta \in [-M,M]^{k \times p}} |\text{MoM}_L^n (f_{\bTheta}) - P f_{\bTheta} |> \epsilon$. To do so, we will individually bound the probabilities of the events 
$$\sup_{\bTheta \in [-M,M]^{k \times p}}( \text{MoM}_L^n (f_{\bTheta}) - P f_{\bTheta}) >\epsilon$$ and $$\sup_{\bTheta \in [-M,M]^{k \times p}}  ( P f_{\bTheta} - \text{MoM}_L^n (f_{\bTheta})) > \epsilon.$$ 
We note that if \[ \sup_{\bTheta \in [-M,M]^{k \times p}}\sum_{\ell = 1}^L \one\left\{(P-P_{B_\ell})f_{\bTheta} > \epsilon\right\} > \frac{L}{2},
\] then \[\sup_{\bTheta \in [-M,M]^{k \times p}}  ( P f_{\bTheta} - \text{MoM}_L^n (f_{\bTheta})) > \epsilon.\] Here again $\one\{\cdot\}$ denote the indicator function. Now let $\varphi(t) = (t-1) \one\{1 \le t \le 2\} + \one\{t >2\}$. Clearly,
\begin{equation}
    \label{eq6}
    \one\{t \ge 2\} \le \varphi(t) \le \one\{t \ge 1\}.
\end{equation} We observe that, 
\begingroup
\allowdisplaybreaks
\begin{align}
    & \sup_{\bTheta \in [-M,M]^{k \times p}}\sum_{\ell = 1}^L \one\left\{(P-P_{B_\ell})f_{\bTheta} > \epsilon\right\}  \nonumber\\
    \le & \sup_{\bTheta \in [-M,M]^{k \times p}}\sum_{\ell \in \cL} \one\left\{(P-P_{B_\ell})f_{\bTheta} > \epsilon\right\} + |\cO|\nonumber\\
    \le & \sup_{\bTheta \in [-M,M]^{k \times p}}\sum_{\ell \in \cL}  \varphi\left(\frac{2(P-P_{B_\ell})f_{\bTheta}}{\epsilon}\right) + |\cO|\nonumber\\
    \le & \sup_{\bTheta \in [-M,M]^{k \times p}}\sum_{\ell \in \cL}  \E \varphi\left(\frac{2(P-P_{B_\ell})f_{\bTheta}}{\epsilon}\right)  + |\cO|\nonumber\\
    & +\sup_{\bTheta \in [-M,M]^{k \times p}}\sum_{\ell \in \cL}  \bigg[ \varphi\left(\frac{2(P-P_{B_\ell})f_{\bTheta}}{\epsilon}\right)   - \E \varphi\left(\frac{2(P-P_{B_\ell})f_{\bTheta}}{\epsilon}\right)\bigg]. \label{eq01}
\end{align}
\endgroup
To bound $\sup_{\bTheta \in [-M,M]^{k \times p}}\sum_{\ell = 1}^L \one\left\{(P-P_{B_\ell})f_{\bTheta} > \epsilon\right\}$, we will first bound the quantity $\E \varphi\left(\frac{2(P-P_{B_\ell})f_{\bTheta}}{\epsilon}\right)$. We observe that, 
\begingroup
\allowdisplaybreaks
\begin{align}
\small 
  \E \varphi\left(\frac{2(P-P_{B_\ell})f_{\bTheta}}{\epsilon}\right) 
  \le  \E \left[ \one\left\{\frac{2(P-P_{B_\ell})f_{\bTheta}}{\epsilon} > 1 \right\}\right] \nonumber  = & \sP \left[ (P-P_{B_\ell})f_{\bTheta}>\frac{\epsilon}{2} \right] \nonumber \\ 
  \le & \exp\left\{-\frac{b \epsilon^2}{32  \lc^2 H_p^2 M^4 k^2 p^2}\right\}
\end{align} 
\endgroup
 We now turn to bounding the term \[ \sup_{\bTheta \in [-M,M]^{k \times p}}\sum_{\ell \in \cL}  \bigg[ \varphi\left(\frac{2(P-P_{B_\ell})f_{\bTheta}}{\epsilon}\right)   - \E \varphi\left(\frac{2(P-P_{B_\ell})f_{\bTheta}}{\epsilon}\right)\bigg]. \] Appealing to Theorem 26.5 of \citep{shalev2014understanding} we observe that, with probability at least $1-e^{-2L \delta^2}$, for all $ \bTheta \in [-M,M]^{k \times p}$,
\begin{align}
  &\frac{1}{L}\sum_{\ell \in \cL}   \varphi\left(\frac{2(P-P_{B_\ell})f_{\bTheta}}{\epsilon}\right) \nonumber \\
  \le & \E\left[\frac{1}{L}\sum_{\ell \in \cL}  \varphi\left(\frac{2(P-P_{B_\ell})f_{\bTheta}}{\epsilon}\right) \right]  + 2\E\left[\sup_{\bTheta \in [-M,M]^{k \times p}}\frac{1}{L}\sum_{\ell \in \cL}\sigma_\ell  \varphi\left(\frac{2(P-P_{B_\ell})f_{\bTheta}}{\epsilon}\right) \right] + \delta.  \label{eq5}
\end{align}
Here $\{\sigma_\ell\}_{\ell \in \mathcal{L}}$ are i.i.d. Rademacher random variables. Let $\{\xi_i\}_{i=1}^n$ be i.i.d. Rademacher random variables, independent form  $\{\sigma_\ell\}_{\ell \in \mathcal{L}}$. From equation \eqref{eq5}, we get,  
\begingroup
\allowdisplaybreaks
\begin{align}
     & \frac{1}{L}\sup_{\bTheta \in [-M,M]^{k \times p}}\sum_{\ell \in \cL}  \bigg[ \varphi\left(\frac{2(P-P_{B_\ell})f_{\bTheta}}{\epsilon}\right)  - \E \varphi\left(\frac{2(P-P_{B_\ell})f_{\bTheta}}{\epsilon}\right)\bigg] \nonumber\\
     \le & 2\E\left[\sup_{\bTheta \in [-M,M]^{k \times p}}\frac{1}{L}\sum_{\ell \in \cL}\sigma_\ell  \varphi\left(\frac{2(P-P_{B_\ell})f_{\bTheta}}{\epsilon}\right) \right] + \delta \nonumber \\
     \le & \frac{4}{L \epsilon}\E\left[\sup_{\bTheta \in [-M,M]^{k \times p}}\sum_{\ell \in \cL}\sigma_\ell  (P-P_{B_\ell})f_{\bTheta} \right]+ \delta. \label{eq7} 
     \end{align}
     Equation \eqref{eq7} follows from the fact that $\varphi(\cdot)$ is 1-Lipschitz and appealing to Lemma 26.9 of \cite{shalev2014understanding}. We now consider a ``ghost" sample $\mathcal{X}^\prime=\{\bX_1^\prime, \dots, \bX_n^\prime\}$, which are i.i.d. and follow the probability law $P$. Thus, equation \eqref{eq7} can be further shown to give
     \begin{align}
     = & \frac{4}{L \epsilon}\E\left[\sup_{\bTheta \in [-M,M]^{k \times p}}\sum_{\ell \in \cL}\sigma_\ell  \E_{\mathcal{X}^\prime}\left((P^\prime_{B_\ell}-P_{B_\ell})f_{\bTheta}\right) \right]+ \delta \nonumber \\
     \le & \frac{4}{L \epsilon}\E\left[\sup_{\bTheta \in [-M,M]^{k \times p}}\sum_{\ell \in \cL}\sigma_\ell  (P^\prime_{B_\ell}-P_{B_\ell})f_{\bTheta} \right]+ \delta \nonumber \\
     = & \frac{4}{L \epsilon}\E\left[\sup_{\bTheta \in [-M,M]^{k \times p}}\sum_{\ell \in \cL}\sigma_\ell  \frac{1}{b}\sum_{i \in B_\ell}(f_{\bTheta}(\bX_i^\prime)-f_{\bTheta}(\bX_i)) \right]+ \delta \nonumber \\
     = & \frac{4}{b L \epsilon}\E\left[\sup_{\bTheta \in [-M,M]^{k \times p}}\sum_{\ell \in \cL}\sigma_\ell  \sum_{i \in B_\ell} \xi_i(f_{\bTheta}(\bX_i^\prime)-f_{\bTheta}(\bX_i)) \right]+ \delta \label{eq8} \\
     = & \frac{4}{n \epsilon}\E\left[\sup_{\bTheta \in [-M,M]^{k \times p}}\sum_{\ell \in \cL}  \sum_{i \in B_\ell} \sigma_\ell \xi_i(f_{\bTheta}(\bX_i^\prime)-f_{\bTheta}(\bX_i)) \right]+ \delta \nonumber \\
     \le & \frac{4}{n \epsilon}\E\left[\sup_{\bTheta \in [-M,M]^{k \times p}}\sum_{\ell \in \cL}  \sum_{i \in B_\ell} \sigma_\ell \xi_i(f_{\bTheta}(\bX_i^\prime)+f_{\bTheta}(\bX_i)) \right]+ \delta \nonumber \\
     = & \frac{4}{n \epsilon}\E\left[\sup_{\bTheta \in [-M,M]^{k \times p}} \sum_{i \in \J} \gamma_i (f_{\bTheta}(\bX_i^\prime)+f_{\bTheta}(\bX_i)) \right] \label{eq9} \\
     = & \frac{8}{n \epsilon}\E\left[\sup_{\bTheta \in [-M,M]^{k \times p}} \sum_{i \in \J} \gamma_i f_{\bTheta}(\bX_i) \right]+ \delta \nonumber \\
     \le & \frac{8}{n \epsilon} 48 \sqrt{\pi} \lc H_p M^2  (kp)^{3/2}  \sqrt{|\J|} + \delta \label{eq10} \\
     \le & \frac{384}{n \epsilon}  \sqrt{\pi} \lc H_p M^2  (kp)^{3/2} \sqrt{|\I|} + \delta. \label{eq11}
\end{align}
\endgroup
Equation \eqref{eq8} follows from observing that  $(f_{\bTheta}(\bX_i^\prime)-f_{\bTheta}(\bX_i)) \overset{d}{=} \xi_i(f_{\bTheta}(\bX_i^\prime)-f_{\bTheta}(\bX_i))$. In equation \eqref{eq9}, $\{\gamma_i\}_{i \in \mathcal{J}}$ are independent Rademacher random variables due to their construction. Equation \eqref{eq10} follows from appealing to Theorem \ref{rademacher}.
Thus, combining equations \eqref{eq5}, \eqref{eq7}, and \eqref{eq11}, we conclude that,  with probability of at least $1-e^{-2L \delta^2}$, 
\begin{align}
    &\sup_{\bTheta \in [-M,M]^{k \times p}}\sum_{\ell = 1}^L \one\left\{(P-P_{B_\ell})f_{\bTheta} > \epsilon\right\} \nonumber \\
  &\le  L \left( \exp\left\{-\frac{b \epsilon^2}{32  \lc^2 H_p^2 M^4 k^2 p^2}\right\}+ \frac{|\cO|}{L} + \frac{384}{n \epsilon}  \sqrt{\pi} \lc H_p M^2  (kp)^{3/2} \sqrt{|\I|} + \delta\right). \label{eq12}
\end{align}
 We choose $\delta = \frac{2}{4+\eta} - \frac{|\cO|}{L}$ and $$\epsilon = 2\max\left\{\sqrt{32 \lc^2 H_p^2 M^4 \log\left(\frac{4(\eta+4)}{\eta}\right)}kp\sqrt{\frac{L}{n}}, \frac{1536(\eta+4) \lc H_p M^2 \sqrt{\pi}}{\eta} (kp)^{3/2} \frac{\sqrt{|\I|}}{n}\right\}.$$ This makes the right hand side of \eqref{eq12} strictly smaller than $\frac{L}{2}$.
Thus, we have shown that 
\begin{align*}
     \sP\left( \sup_{\bTheta \in [-M,M]^{k \times p}} (Pf_{\bTheta} - \text{MoM}^n_L (f_{\bTheta}))> \epsilon \right) \le e^{-2L \delta^2}.
 \end{align*}
Similarly, we can show that,
\begin{align*}
    \sP\left( \sup_{\bTheta \in [-M,M]^{k \times p}} (\text{MoM}^n_L (f_{\bTheta}) -Pf_{\bTheta} ) > \epsilon \right) \le e^{-2L \delta^2}.
\end{align*}
Combining the above two inequalities, we get, 
\[\sP\left( \sup_{\bTheta \in [-M,M]^{k \times p}} |\text{MoM}^n_L (f_{\bTheta}) -Pf_{\bTheta} | > \epsilon \right) \le 2e^{-2L \delta^2}.\]
In other words, with at least probability $1-2e^{-2L \delta^2}$,
\begin{align*}
    & \sup_{\bTheta \in [-M,M]^{k \times p}} |\text{MoM}^n_L (f_{\bTheta}) -Pf_{\bTheta} | \\
    \le & 2\max\left\{\sqrt{32 \lc^2 H_p^2 M^4 \log\left(\frac{4(\eta+4)}{\eta}\right)}kp\sqrt{\frac{L}{n}}, \frac{1536(\eta+4) \lc H_p M^2 \sqrt{\pi}}{\eta} (kp)^{3/2} \frac{\sqrt{|\I|}}{n}\right\}\\
    \lesssim & \lc H_p \max\left\{ kp\sqrt{\frac{L}{n}}, (kp)^{3/2}  \frac{\sqrt{|\I|}}{n}\right\}.
\end{align*}
\end{proof}
\subsection{Proof of Corollary \ref{cor6}}
\begin{proof}
We observe the follwoing.
\begin{align}
    & |P f_{\tm} - P f_{\bTheta^\ast}|  \nonumber\\
    & = P f_{\tm} - P f_{\bTheta^\ast} \nonumber\\
    & =  P f_{\tm} - \text{MoM}^n_L (f_{\tm}) + \text{MoM}^n_L (f_{\tm}) - \text{MoM}^n_L (f_{\bTheta^\ast}) + \text{MoM}^n_L (f_{\bTheta^\ast}) - P f_{\bTheta^\ast} \nonumber \\
    & \le P f_{\tm} - \text{MoM}^n_L (f_{\tm}) + \text{MoM}^n_L (f_{\bTheta^\ast}) - P f_{\bTheta^\ast} \label{qqq3}\\
    & \le 2 \sup_{\bTheta \in [-M,M]^{k \times p}} |\text{MoM}^n_L (f_{\bTheta}) -Pf_{\bTheta} | \nonumber \\
    & \lesssim \lc H_p \max\left\{ kp\sqrt{\frac{L}{n}}, (kp)^{3/2}  \frac{\sqrt{|\I|}}{n}\right\}. \nonumber
\end{align}
Inequality \eqref{qqq3} follows from the fact that $\text{MoM}^n_L (f_{\tm}) \le  \text{MoM}^n_L (f_{\bTheta^\ast})$, by definition of $\tm$.
\end{proof}
\subsection{Proof of Corollary \ref{cor7}}
\begin{proof}
In this case, $H_p = 2$. Thus, the bound in  Corollary \ref{cor6}
becomes $|P f_{\tm} - P f_{\bTheta^\ast}|    \lesssim  \max\left\{ \sqrt{\frac{L}{n}}, \frac{\sqrt{|\I|}}{n}\right\} $
. By A\ref{aa6},
$2e^{-2L \delta^2}=o(1)$. Thus, \[\sP\left(|P f_{\tm} - P f_{\bTheta^\ast}| = O\left(\max\left\{ \sqrt{\frac{L}{n}}, \frac{\sqrt{|\I|}}{n}\right\}\right)\right) \ge 1-o(1).\] Hence, $|P f_{\tm} - P f_{\bTheta^\ast}| = O_P\left(\max\left\{ \sqrt{\frac{L}{n}},   \frac{1}{\sqrt{n}}\right\}\right)$.

Under A\ref{aa6},
$\max\left\{ \sqrt{\frac{L}{n}},  \frac{\sqrt{|\I|}}{n}\right\} \le \max\left\{ \sqrt{\frac{L}{n}},  \frac{1}{\sqrt{n}}\right\} = o(1) \, \implies \, |P f_{\tm} - P f_{\bTheta^\ast}| =o_P(1) $\footnote{$X_n = o_P(a_n)$ if $X_n/a_n \xrightarrow{P} 0$.}. Thus,  $P f_{\tm} \xrightarrow{P} P f_{\bTheta^\ast}$. Now, for any $\epsilon,\delta>0$ , $\sP(P f_{\hth} \le  P f_{\bTheta^\ast} + \epsilon) \ge 1-\delta$, if $n$ is large. From assumption A\ref{a5},
$\sP(\text{diss}(\hth,\bTheta^\ast) \le  \eta) \ge 1-\delta$ for any prefixed $\eta>0$, and $n$ large. Thus, $\text{diss}(\hth, \bTheta^\ast) \xrightarrow{P}0$, which proves the result.
\end{proof}

\section{Additional Experiments}
\subsection{Additional Simulations}
\paragraph{Experiment 3} 
We use the same simulation setting as Experiment 1. However, the outliers are now generated from a Gaussian as well with mean coordinate $20\times\mathbf{1}_5$, and covariance matrix $0.1I_5$, where $\mathbf{1}_5$ is the $5$ dimensional vector of all $1$'s and $I_5$ is the $5\times 5$ identity matrix.
\paragraph{Experiment 4}  We use the same simulation setting as Experiment 2.  However, the outliers are now generated from the same scheme as in Experiment 3. 

\begin{figure}[h]
     \centering
     \begin{subfigure}[b]{0.48\textwidth}
         \centering
         \includegraphics[width=\textwidth,height=0.3\textheight]{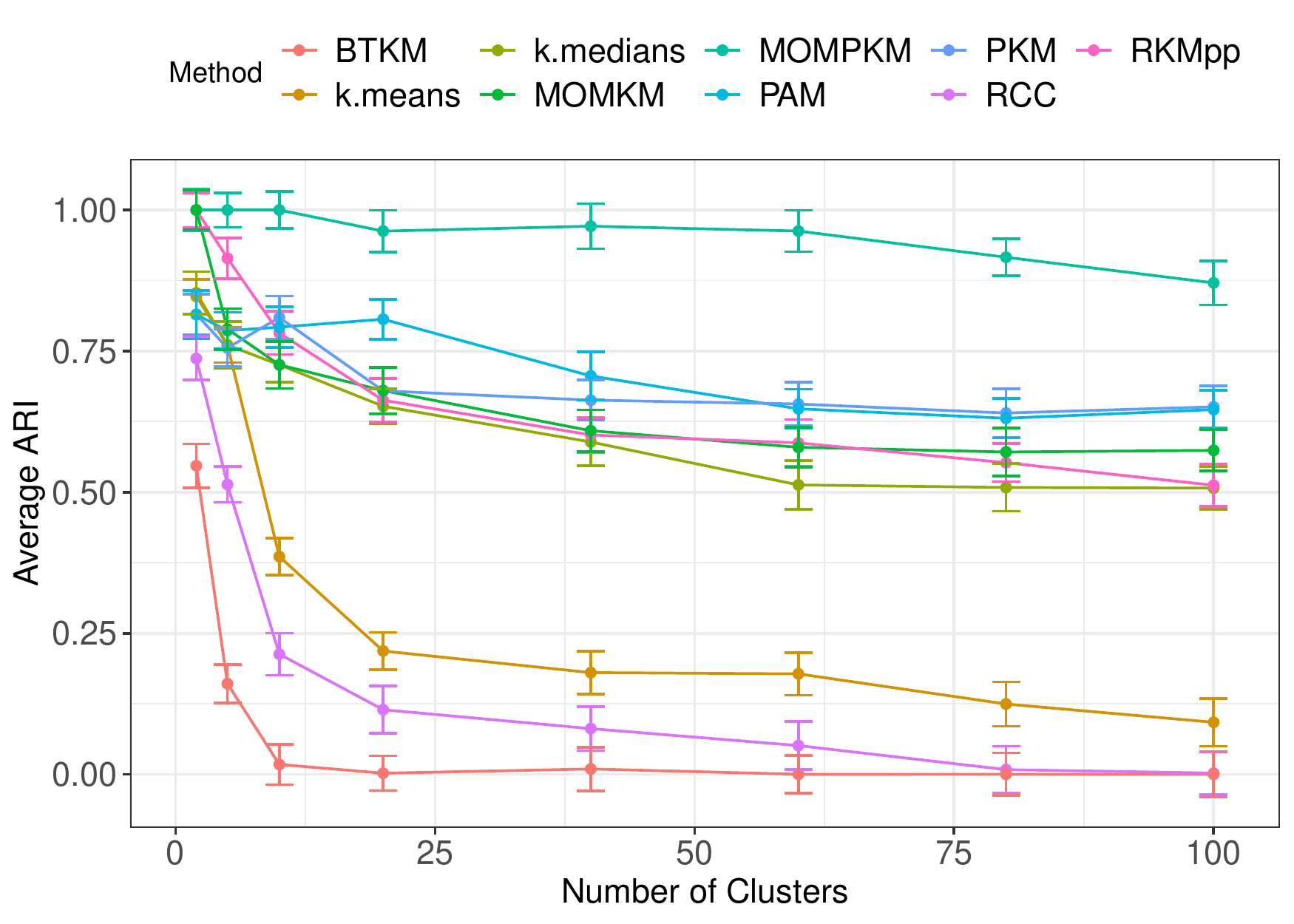}
         \caption{Simulation 3}
     \end{subfigure}
     \hfill
      \begin{subfigure}[b]{0.48\textwidth}
         \centering
         \includegraphics[width=\textwidth,height=0.3\textheight]{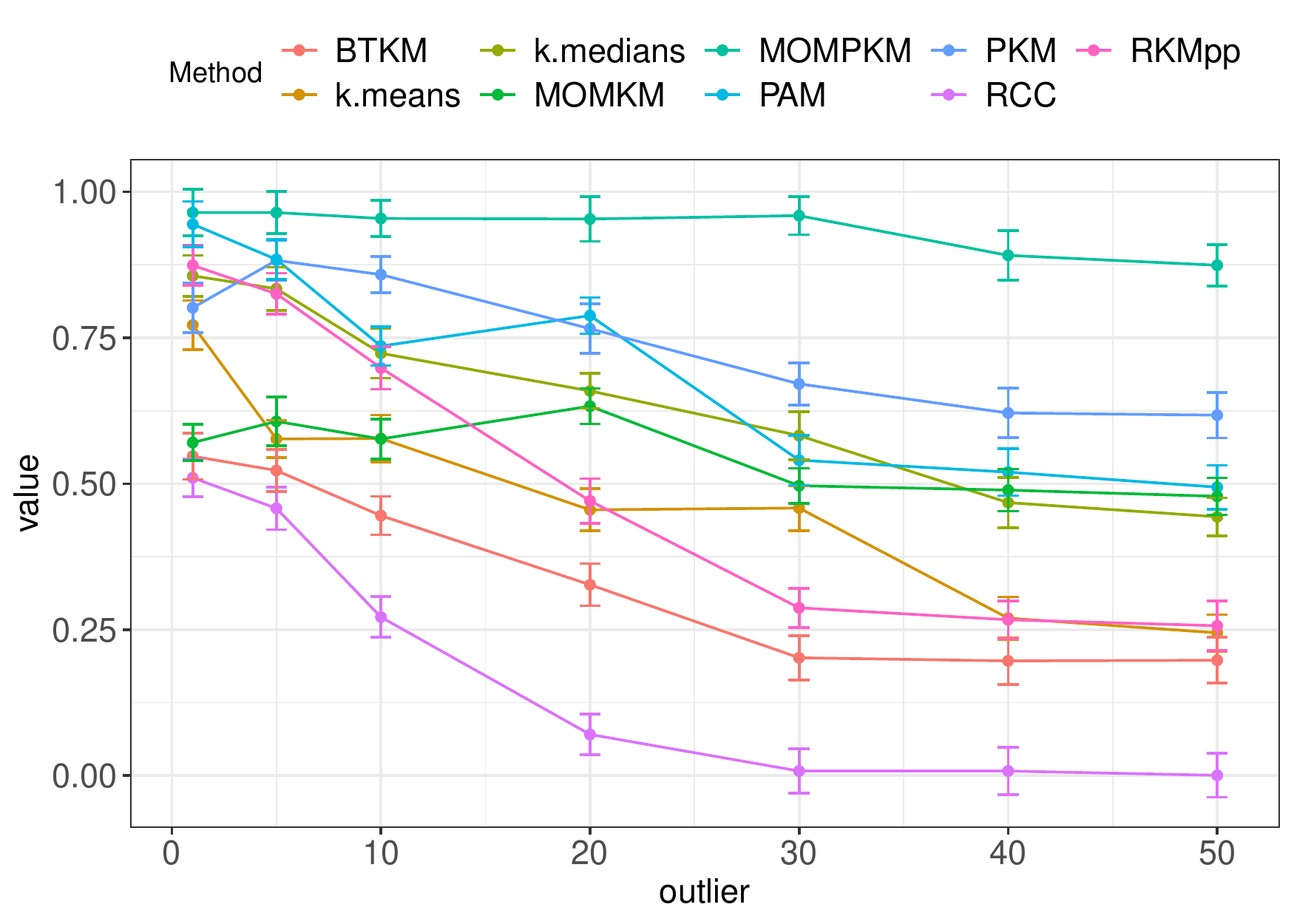}
         \caption{Simulation 4}
     \end{subfigure}
        \caption{Results on Simulation Studies based on Average ARI Values}
        \label{f1}
\end{figure}
\subsection{Case Study on Real Data: KDDCUP}

In this section, we assess the performance of real data through the analysis of KDDCUP dataset \citep{alcala2010keel}, 
and consists of approximately $4.9$M observations depicting connections of sequences of TCP packets. The features are normalized to have zero mean and unit standard deviation. The data contains $23$ classes, out of which, the three largest contain $98\%$ of the observations. Following the footsteps of \cite{deshpande2020robust}, the remaining $20$ classes consisting of $8752$ points are considered as outliers. We run all the algorithms as described in the beginning of Section 4. 
The parameters considered for our algorithm are $L=10000$, $\eta=1.02$ and $\alpha=1$. We measure the performance of this algorithm in terms of the ARI, as well as average precision and recall \citep{hastie2009elements}. The last two indices are added following \citep{deshpande2020robust}. We report the average of these indices out of $20$ replications in Table \ref{tab1}. For all these indices, a higher value implies superior performance. Table \ref{tab1} shows similar trends as discussed in Section 4 of the main text. In particular, MOMPKM resembles the ground truth compared to the state-of-the-art. Surprisingly, RKMpp performs better than other competitors (except for MOMPKM), which was not  the case for simulated data under ideal model assumptions. This is possibly because of the fact that the data contains only 47 features, compared to almost 5M samples, so that good seedings capitalize on the higher signal-to-noise-ratio, compared to that of the data used in the simulation studies.

\begin{table}
    \centering
     \caption{Results on KDDCUP Dataset}
    \label{tab1}
    \begin{tabular}{c c c c c c c c c}
\hline 
    Index & k.means & BTKM & RCC & PAM & RKMpp & PKM & MOMKM & MOMPKM \\
    \hline
      ARI & $10^{-5}$ & $0.01$ & $10^{-5}$ & $10^{-16}$ & $0.81$ & $0.24$ & $0.76$ & $\mathbf{0.87}$\\
      Precision  & $0.25$ & $0.23$ & $0.19$ & $0.23$ & $0.64$ & $0.43$ & $0.56$ & $\mathbf{0.71}$\\
      Recall  & 0.00 & 0.14 & 0.07 & 0.11 & 0.63 & 0.49 & 0.59 & \textbf{0.76}\\
      \hline
    \end{tabular}
\end{table}

\section{Machine Specifications}
The simulation studies were undertaken  on  an \texttt{HP} laptop  with  Intel(R)  Core(TM)i3-5010U  2.10  GHz  processor,  4GB  RAM,  64-bit  Windows 8.1  operating  system in \texttt{R} and \texttt{python 3.7} programming languages. The real data experiments were undertaken on a computing cluster with 656 cores (essentially CPUs) spread across a number of nodes of varying hardware specifications. 256 of the cores are in the `low' partition. There are 32 cores and 256 GB RAM per node.

\end{document}